\numberwithin{equation}{section}
\providecommand{\U}[1]{\protect\rule{.1in}{.1in}}
\newtheorem{theorem}{Theorem}[section]
\newtheorem{lemma}[theorem]{Lemma}
\newtheorem{proposition}[theorem]{Proposition}
\newtheorem{remark}[theorem]{Remark}
\newcommand{\veps}{\varepsilon}
\newcommand{\e}{\varepsilon}
\newcommand{\x}{x}
\newcommand{\R}{\mathbb{R}}
\renewcommand{\L}{\Delta}
\DeclareMathOperator{\Prob}{\mathbb{P}}
\newcommand{\M}{\mathcal{M}}
\newcommand{\bgamma}{\bm{\gamma}}
\definecolor{mygreen}{rgb}{0.1,0.75,0.2}
\newcommand{\nc}{\normalcolor}
\newcommand{\E}{\mathbb{E}}
\newcommand{\Var}{Var}
\newcommand{\eps}{\varepsilon}
\newcommand{\N}{\mathbb{N}}
\title{ A maximum principle argument for the uniform convergence of graph Laplacian regressors}
\author{Nicol\'as Garc\'ia Trillos}
\address{Department of Statistics, University of Wisconsin, Madison, Wisconsin, USA}
\email{garciatrillo@wisc.edu}
\author{Ryan Murray}
\address{Department of Mathematics, North Carolina State University, Raleigh, NC, USA}
\email{rwmurray@ncsu.edu}
\subjclass[2010]{35J05, 49J55, 60D05, 62G08, 68R10}
\keywords{Empirical risk minimization, graph Laplacian, discrete to continuum, non-parametric regression}
\begin{document}
	\thanks{*To be published in SIMODS}
\maketitle

\begin{abstract}
This paper investigates the use of methods from partial differential equations and the Calculus of variations to study learning problems that are regularized using graph Laplacians. Graph Laplacians are a powerful, flexible method for capturing local and global geometry in many classes of learning problems, and the techniques developed in this paper help to broaden the methodology of studying such problems. In particular, we develop the use of maximum principle arguments to establish asymptotic consistency guarantees within the context of noise corrupted, non-parametric regression with samples living on an unknown manifold embedded in $\R^d$. The maximum principle arguments provide a new technical tool which informs parameter selection by giving concrete error estimates in terms of various regularization parameters. A review of learning algorithms which utilize graph Laplacians, as well as previous developments in the use of differential equation and variational techniques to study those algorithms, is given. In addition, new connections are drawn between Laplacian methods and other machine learning techniques, such as kernel regression and k-nearest neighbor methods. 
\nc

\end{abstract}

\section{Introduction}\label{sec:Intro}

In this paper we present new theoretical results on the consistency of solutions to a family of variational problems that use graph Laplacian regularization for trend filtering and supervised learning  with noisy labels, and determine scaling limits under which one can provably avoid overfitting as the number of data points grows. In addition, we draw new parallels between the methods studied here and other non-parametric regression methodologies found in the literature. Throughout the paper we highlight the analytical approach that we take in order to establish our high probability quantitative results, and that in particular allows us to study the behavior of solutions to  non-linear graph-based equations. This analytical approach provides a new avenue for studying algorithms that utilize graph Laplacians; such algorithms are utilized in a wide variety of learning problems (see Section \ref{sec:GraphPDEs} for further discussion)  \nc

Given a data set $X=\{x_1, \dots, x_n\}$ and corresponding \textit{noisy} labels $y_1, \dots, y_n$, the idea in trend filtering is to reconstruct a trend function $u$ taking inputs $x$ into outputs $y$ which closely matches the observed labels. Without further constraints, finding such a function is an ill-posed problem as there are many functions that will respect the observed data, most of which should be intuitively discarded as they overfit the observations $y_i$. To overcome this overfitting issue and to turn an ill-posed problem into a well-posed one, a popular idea used in applied mathematics \cite{tikhonov1963regularization} and statistics \cite{wahba1990spline,hoerl1970ridge} is to introduce a functional $R$ (a ``regularizer'') which penalizes ``irregular" functions, and to solve an optimization problem of the form
\[ \min  R(u) +  F(u,y), \]
where $F(u,y)$ represents a loss function measuring empirical risk. In a classical statistical setting the loss function is dictated by the noise model, but in general, the function $F(u,y)$ may simply be interpreted as a mismatch function between the observations and the regressor $u$. On the other hand, while \nc the choice of regularizer is largely open ended, intuitively it should be selected to enforce ``smoothness" with respect to some underlying geometry (informally, one wants to force the solution to not change too much by making small perturbations of the input).  \nc

For generic set of points in Euclidean space, a popular choice of regularizer is the squared norm of a reproducing kernel Hilbert space (RKHS) as discussed in \cite{Belkin-Niyogi-Sindhwani}. This approach is analogous to the use of ridge regression in classical statistics, and in fact can actually be rigorously cast in this way after transforming the data through a map canonically induced by the reproducing property of a kernel (see \cite{Smola-Kernel-Book}).

As elegant and convenient as the use of RKHSs may be, a generic choice of kernel to be used within the above regularization procedure will typically be oblivious to the specific geometric configuration of a data set $X$. This is particularly problematic in settings like that of semi-supervised learning, where one hopes the underlying geometry of the data set reveals information that the limited number of available labels can not. Motivated by this discussion, several authors,  including \cite{Lafferty1,smola2003kernels,Belkin-Niyogi-Sindhwani,Ando}, consider regularizers which exploit the intrinsic geometric structure of a data set. In their set-up, a data set $X$ comes with an additional weighted graph structure $\Gamma =(X, W)$, where $W$ represents an $n \times n$ similarity matrix and which intuitively captures the geometry of $X$.  

In this paper we study intrinsic regularization in the context of trend filtering and fully supervised learning. As in \cite{smola2003kernels} the optimization problems we are interested in take the form
\begin{equation}
 \min_{u} J(u),\qquad J(u) := \beta R_\Gamma(u) + \frac{1}{n}\sum_{i=1}^nF(u(x_i) - y_i),\nc
\label{op:graphBelkin} 
\end{equation}
where we have made the dependence of the regularizer $R$ on the graph $\Gamma$ explicit. The empirical risk that we consider here is directly linked to the log-likelihood of an \textit{assumed} distribution $e^{-F(-s)}$ for the noise terms in an additive model of the form
\begin{equation}
y_i= \mu(x_i) + \xi_i  
\label{eqn:assumedModel}
\end{equation}
where $\mu$ is some underlying ground-truth function. Notice that the \textit{assumed} noise distribution may not be the same as the \textit{actual} noise distribution which we will denote as $p(s)ds$. The standard choice for $F$ is the square loss corresponding to a Gaussian model, and we will give particular attention to this choice later on. On the other hand, the regularizer $R_{\Gamma}$ that we will focus on \nc in this paper is the graph \textit{Dirichlet energy} defined by
\begin{equation}
R_{\Gamma}(u):= \frac{1}{2n}\sum_{i,j}w_{ij}| u(x_i)- u(x_j)|^2.
\label{eqn:GraphDirichlet}
\end{equation}
The relevance of the graph Dirichlet energy is due to its close connection to the linear in $u$ \textit{graph Laplacian} 
\begin{equation}
\label{eqn:graphLap}
\Delta_\Gamma u (x_i) := \sum_{j=1}^n w_{ij} ( u(x_i) - u(x_j)) , \quad x_i \in X,
\end{equation}
indeed, it is straightforward to show that 
\[ R_\Gamma(u)= \langle \Delta_\Gamma u ,u \rangle_{X}. \]\nc
The graph Dirichlet energy is a special choice of intrinsic regularizer with several properties that will be discussed throughout the paper. Other sensible intrinsic regularizers are the graph $p$-Sobolev norms (obtained by changing the square in the above energy with a $p$-th power) for some $p \geq 1$ (and in particular when $p >m$ where $m$ is the intrinsic dimension of the data set; see \cite{Lp-laplacian-El-Alaoui-2016}).  The practical use of Laplacian regularization (or similar variants) was proposed some years ago \cite{smola2003kernels}, but has recently received new attention in the statistics and machine learning community \cite{rosasco2013nonparametric,tibshirani2014adaptive,wang2016trend}.  \nc  Bayesian approaches to learning where graph Laplacians are used as covariance matrices in order to define Gaussian priors that exploit the intrinsic geometry of the data set have been considered in \cite{Lafferty1,kirichenko2017,BertoStuart}. We notice that from a Bayesian perspective, the solution of problem \eqref{op:graphBelkin} is the MAP estimator (maximum a posteriori estimator) for $u$, in a model where the unknown variable $u$ is assumed to have a Gaussian prior distribution with mean zero and covariance matrix equal to $\beta$ times the graph Laplacian, and where the observations $y$ depend on $u$ according to an assumed additive noise model of the form \eqref{eqn:assumedModel} with noise distribution $e^{-F(-s)}$.  \nc

Going back to our problem \eqref{op:graphBelkin}, we notice that the graph Laplacian appears explicitly in the optimality conditions satisfied by the minimizer of \eqref{op:graphBelkin}, namely
\begin{equation}
 \beta \Delta_\Gamma u  +   f(u-y) =  0,
\label{graphPDE1}
\end{equation}
where $f = F'$. Equation \eqref{graphPDE1} can be interpreted as an elliptic graph partial differential equation (PDE), which is linear in its highest order term, i.e. the term that involves ``derivatives" of $u$ (in this case the graph Laplacian $\Delta_\Gamma u$), but that overall is not linear unless the loss function $F$ is quadratic (so that its derivative is linear). The most appropriate terminology for such an equation is \textit{semilinear equation}. This graph PDE, which emerges in a fully supervised learning setting, is nothing but one of many examples in the family of graph PDE-based machine learning methodologies. Indeed, there is a large family of machine learning methodologies for supervised and unsupervised learning that, at their core, are described by a graph PDE involving the graph Laplacian, the fractional graph Laplacian (i.e. powers of the graph Laplacian), or the $p$-graph Laplacian, followed by an extension step (in supervised settings) or a clustering step (e.g. $k$-means after the embedding step in spectral clustering). Many of these approaches will be mentioned in section \ref{sec:GraphPDEs} below. This paper aims at studying the statistical properties of solutions to \eqref{graphPDE1}, with an emphasis on the analytical techniques that allow us to handle the nonlinear term in \eqref{graphPDE1}. More concretely, we aim at answering the following questions:
\nc
\begin{enumerate}
	\item What is the behavior of the solution to \eqref{op:graphBelkin} (alternatively solutions to the graph PDE \eqref{graphPDE1}) as the number of data points goes to infinity, when the data set is assumed to be a collection of i.i.d. samples from a distribution supported on a $m$-dimensional curved manifold $\M$ (here we intuitively think of $m \ll d$, although this will not be important for our analysis), the graph is obtained from the data set by giving high weights only to points that are within Euclidean distance $\veps$ of each other, and the labels are \textit{noisy} versions of a hidden trend function $\mu$?  
	\item How should $\beta$ scale with $n$ so that the solution to \eqref{op:graphBelkin} converges to the Bayes regressor (i.e. the underlying label trend) in the large data limit $n \rightarrow \infty$, and in particular so that the regression procedure does not overfit the data?
\end{enumerate}
 
 To answer these questions we exploit properties of the graph Laplacian, and in particular a \textit{maximum principle} argument at the graph level to prove that the solution of \eqref{graphPDE1} converges \textit{uniformly} towards the solution of an analogous homogenized partial differential equation on $\M$ with probability one (homogenized in $x$ and $y$). We provide explicit rates of convergence with high probability in terms of the number of samples $n$, the parameter $\veps$ controlling data connectivity in the graph, and the parameter $\beta$ controlling the strength of regularization; our rates turn out to be essentially minimax optimal (see the discussion at the end of section \ref{sec:litera}) \nc. The proposed maximum principle (see Proposition \ref{prop:maxpple} below) allows us to handle any sufficiently smooth, strongly convex  \nc $F$. We also provide a characterization for how $\beta$ must scale with $n$ in order to recover, in the large data limit, a modified trend $\mu_f$ which depends on $\mu$ and on the function $f$. Indeed, unless the function $F$ is quadratic (i.e. $f$ is linear), in the regime $n \rightarrow \infty$, $\beta:= \beta_n\rightarrow 0$ the trend $\mu$ may not be recovered, unless further assumptions on the distribution of the noisy labels are imposed. We provide uniform rates of convergence towards the modified trend. Stated in another way, we provide quantified asymptotic consistency estimates for this class of non-parametric regression algorithms. Our estimates will be decomposed into sample error (analogous to variance) and approximation estimates (analogous to bias). Our arguments to bound the sample error are essentially as difficult for the quadratic $F$ as for the general $F$ (and the maximum principle and monotonicity of $f=F'$ will be key to show this). For the approximation part, we will separate the analysis for the quadratic case from the general case, as a richer set of tools is  available in the quadratic case. \nc
 
 We would like to highlight that our results are novel with respect to the existing literature in several regards that we summarize below.\nc
\begin{itemize}
	\item We show that the solution to \eqref{op:graphBelkin} (alternatively \eqref{graphPDE1}) converges towards the solution of an analogous limiting variational problem. This contrasts with several results in the literature that establish a connection of the graph Laplacian and the negative Laplace-Beltrami operator $\Delta_\M$ from the point of view of \textit{pointwise convergence}, where one fixes a regular enough function $h : \M \rightarrow \R$ and establishes convergence rates for
	\[ \Delta_\M h - \Delta_\Gamma h.\]
      Some of these results include \cite{BelkinNiyogi,Singer,HeinMatthias2005FGtM,Coifman7426,gine}. We remark that such results do not allow one to immediately deduce convergence of solutions to optimization problems on graphs. In this paper we show how one can bootstrap the pointwise consistency estimates to obtain convergence of solutions to optimization problems on graphs that involve the graph Laplacian, using ideas from PDE theory. In particular, the regularity of the solution to a limiting PDE will be important when using the pointwise consistency of the graph Laplacian. In addition, a very careful construction of upper and lower \textit{barrier functions} for the solution of \eqref{graphPDE1}, and the use of a maximum principle at the graph level will be crucial tools in order to handle the randomness in the data points and their noisy labels. For more details see Section \ref{sec:MainThms}.

	\item While there are many recent results studying the consistency of solutions to optimization problems on graphs towards solutions to variational problems and PDEs on $\M$ (see Section \ref{sec:literature} below for a discussion), most of these results do not obtain rates of convergence. Here we use the extra structure of the graph Laplacian to obtain rates for the uniform convergence of solutions to \eqref{graphPDE1}. We emphasize that we obtain \textit{uniform} (i.e. $L^\infty$) rates of convergence instead of $L^2$-type or other type of averaged error, as it is most commonly found in the literature.  We will compare our work with the few exceptions to this general absence of quantitative estimates, which are usually obtained with different assumptions from ours, particularly with regards to the next point.
	
	 \item  In our set-up the labels $y_i$ are assumed to be noisy versions of a hidden trend function $\mu$ so that
	 \[ y_i = \mu(x_i) + \xi_i.\] 
	 We show that provided that the regularization parameter $\beta$ in \eqref{graphPDE1} does not decay too fast to zero (and we characterize this precisely), the solution to \eqref{graphPDE1} converges to some (modified) trend function in the limit. In particular, we show that graph Laplacian regularization is indeed capable of removing noise and prevents overfitting. 
	 
	 \item We connect the Laplacian regularization stemming from \eqref{graphPDE1} with other non-parametric methodologies for regression based on local averaging, and which adapt to the local geometry of the underlying manifold (e.g. $k$-NN regressors). This discussion is presented in Section \ref{sec:litera} below. 
%
	 \item We emphasize that our approach allows us to handle loss functions $F$ that are different from quadratic. As mentioned earlier, from a statistical point of view this gives us some flexibility in the assumptions made on the observations. From a technical point of view, the effect that different $F$s have in the resulting graph PDE is the difference between having a linear PDE (in case $F$ is quadratic) and having a non-linear PDE (if $F$ is not quadratic) where in principle the analysis is more difficult.      \nc
\end{itemize}

We notice that the observed data $(x_i,y_i)$ affects the equation \eqref{graphPDE1} in two different ways: first, the $x_i$ determine the graph Laplacian and in this way the ``differential'' operator that we study is random, and second the noisy label observations $y_i$ affect the right hand side of the equation. At a high level \eqref{graphPDE1} can be thought as a numerical scheme for solving the limiting continuum PDE that we derive. However, the convergence analysis of such a \nc numerical scheme is not standard in the literature, given the randomness in \textit{both} the approximating Laplacian (in our case the graph Laplacian) and the noisy right hand side.  It is precisely the presence of the regularizer that removes noise in the limit and helps to prevent overfitting. In summary, when going from \eqref{graphPDE1} to the limiting  PDE, there are two parts in the equation that get simultaneously homogenized: on the one hand the graph Laplacian behaves like a Laplacian operator on $\M$, and on the other, the noisy labels homogenize to some trend function.

\nc
%
%
%
We now outline the remainder of the work. In Section \ref{sec:setup-results} we give a precise description of the problem and main results. In Section \ref{sec:comparison} we review related literature, comparing our results and techniques with other work. In Section \ref{sec:proof-of-main-results} we then present the proofs of our results.

\nc

\section{Problem set-up and Main results} \label{sec:setup-results}

\subsection{Set-up}
\label{sec:Setup}
We consider an $m$-dimensional smooth, compact, manifold $\M$ embedded in $\R^d$ with no boundary. A significant body of work studies how to estimate $m$ given a family of points (see e.g. \cite{LITTLE2017504}); for the purpose of this work we will treat $m$ as a priori known. Throughout the paper we will denote by $|x-\tilde{x}|$ the Euclidean distance in $\R^d$ and by $d_\M(x, \tilde{x})$ the geodesic distance of two points in $\M$. We will denote by $i_0$ the injectivity radius of the manifold $\M$. 
We recall that the injectivity radius of a manifold is defined as the maximum radius for which the exponential map $\exp_x : B_m(0, i_0) \subseteq \mathcal{T}_x \M \rightarrow B_\M(x,i_0)$ defines a diffeomorphism for all $x \in \M$ (see \cite{docarmo1992riemannian}). In the remainder we use $B_m$ to denote balls in $\R^m$, $B_\M$ for balls in $\M$ with the geodesic distance, and finally $B$ for balls in $\R^d$. Finally, we will denote by $dvol_\M$ the volume form of $\M$ and, after rescaling as necessary, we will assume that the volume of $\M$ is equal to one.


In the remainder, and unless otherwise stated, we will assume that $x_1, \dots, x_n$ are i.i.d. samples from the uniform distribution on $\M$. We let $\mu: \M \rightarrow \R$ be an unknown trend function and assume that the labels $y_1, \dots, y_n$ are obtained by 
\begin{equation}
y_i = \mu(x_i) + \xi_i,
\label{eqn:noisemodel}
\end{equation}
where the $\xi_i$ are identically distributed with distribution $p(s) ds$, are independent of each other and from the $x_i$, and satisfy
\[  \E(\xi_i )=0 , \quad   | \xi_i| \leq \sigma, \]
where $\sigma $ is a fixed constant specifying the noise level. We have chosen to focus on the bounded noise case in order to simplify some of the technical arguments, in particular the construction and bounds of the barrier functions $y^-$ and $y^+$ in the proof of Theorem \ref{thm:var}. We believe that many of the arguments and results could be modified to accommodate other noise models but we do not pursue it here. \nc
 
We assume that the loss function $F: \R \rightarrow [0,\infty)$ is a smooth function which satisfies $F(0)=0$ and $F''(x) > 0$ for all $x$; in some references this is called \emph{strong convexity}. In particular, this implies that $f:= F'$ is a strictly monotone function, and that $f'(x)>0$ for all $x > 0$ and $f'(x) < 0$ for all $x < 0$. \nc

All the theorems presented in Section \ref{sec:MainThms} are stated in the setting described above, but in Remark \ref{rem:non-uniform-rho} we write precisely how they should be restated to cover a more general setting. No difficulties arise when extending these results, other than having to deal with more cumbersome notation and longer expressions.

By way of notation, $\|h\|_{C^k} = \sum_{i=1}^k \|D^ih\|_{C^0}$, where $C^0$ is the space of continuous functions equipped with the supremum norm, and where when write the $\|D^ih\|_{C^0}$ we mean that we are summing the norm of all of the mixed derivatives of order $i$. For $0 < \alpha < 1$, space of H\"older continuous function $C^\alpha$ are continuous functions equipped with the norm
\[
\|h\|_{C^\alpha} = \sup_{x,y \in \M} \frac{|h(x)-h(y)|}{|x-y|^\alpha} + \|h\|_{C^0}.
\]
The space $C^{k,\alpha}$ is given by the norm $\|h\|_{C^{k,\alpha}} = \|h\|_{C^k} + \|D^k h\|_{C^{\alpha}}$. For any natural number $r$, the function space $H^r(\M)$ is the defined by the norm $\|h\|_{H^r} = \|h\|_{L^2(\M)} + \|D^r h\|_{L^2(\M)}$.
\nc

%
%

\nc
%
%

\subsubsection{Graph construction and graph PDE}
\label{sec:graph}
Given $X := \{\x_i\}_{i=1}^n$ we construct a geometric graph as follows. We let $\eta:[0,\infty) \to [0,\infty)$ be a non-increasing function which is only non-zero on $[0,1]$. We further assume $\eta$ to be Lipschitz continuous and normalized so that
\[
  \int_{\R^m} \eta(|z|)\,dz = 1.
\]
We define the constant
\begin{equation}
  \tau_\eta := \int_{\R^m} |z_1|^2 \eta(|z|)\,dz,
\label{sigmaeta}
\end{equation}
where $z_1$ is the first coordinate of $z$. \nc
Between every two vertices $\x_i, \x_j \in X$ we assign the weight
\begin{equation}\label{eqn:weight-def}
  w_{i,j} = \frac{2}{\tau_\eta \veps^{m+2}n}\eta\left( \frac{|\x_i - \x_j|}{\veps}\right).
\end{equation}
Here the $\e^m$ in the denominator is a rescaling so that the weights at each vertex sum to approximately one, while the $\e^2$ in the denominator is chosen so that $\frac{|u(x_i)-u(x_j)|^2}{\e^2} \sim |\nabla u|^2$. The weighted graph $(X,w)$ is a geometric graph representing the proximity of the sample points $\x_i$ in $\R^d$. We have rescaled the weights for convenience (in taking limits as $n \to \infty$).\nc

%

\subsubsection{Limiting variational problem and PDE}

At the continuum level, we first define an analogue of the graph regularizer $R_\Gamma$. The Dirichlet energy of a function $v : \M \rightarrow \R$ is defined as
\[ R_\M(v) := \int_{\M}  |\nabla v|^2 dvol_\M,  \]
whenever $v $ is in the Sobolev space $H^1(\M)$. Also, for a smooth function $v$ we define the elliptic operator $\Delta_\M$ as 
\[ \Delta_{\M} v:= - div( \nabla v ), \]
i.e. the negative of the Laplace-Beltrami operator on $\M$. It is straightforward to show (see Section \ref{sec:bias-estimates})  that the Euler-Lagrange equation associated to the variational problem
\begin{equation}
\min_{v } \left\{ \beta R_\M(v) + \int_{\M} \int_{\R} F(v(x)- \mu(x)-s ) p(s) ds   d vol_\M (x)  \right\} \label{eqn:M-Var-Prob}
\end{equation}
is the PDE
\begin{equation}
\beta \Delta_\M v +\int_\R f(v-\mu-s) p(s) ds =0 .
\label{MPDE}
\end{equation}
Equation \eqref{MPDE} is the continuum ``homogenized'' analogue of the graph PDE \eqref{graphPDE1}. Existence and uniqueness as well as regularity properties of the solution to \eqref{MPDE} are discussed in Theorem \ref{thm:bias}. We remind the reader that $p(s)ds$ is the actual \nc distribution  function of the label noise. \nc

When passing from the graph PDE \eqref{graphPDE1} to the PDE \eqref{MPDE}, we notice that there are two terms that get homogenized. \nc On the one hand, as more feature vectors $\x_i$ are available, the graph $\Gamma$ gets denser, and the graph Laplacian $\Delta_\Gamma$ starts behaving like $\Delta_\M$. On the other hand, as more labels $y_i$ are acquired, we would like to obtain homogenization of the fidelity in \eqref{op:graphBelkin}. In the next section we present our main results relating the solutions to these two equations, i.e. \eqref{graphPDE1} and \eqref{MPDE}.

\subsection{Main results and discussion}
\label{sec:MainThms}

Our first main result establishes probabilistic error bounds for 
\[ \max_{i=1, \dots, n} | u(\x_i) - v(\x_i) |   \]
where $u$ is the solution to the graph PDE \eqref{graphPDE1} (with  the graph $\Gamma$ as defined in Section \eqref{sec:graph}), and $v$ is the solution to \eqref{MPDE}. That is, we estimate the difference between the data dependent $u$ and a homogenized function $v$. \nc
\begin{theorem}(Sample error)
\label{thm:var}
 Suppose that $u$ is the solution to the elliptic graph PDE \eqref{graphPDE1} where $\Delta_\Gamma$ is defined in \eqref{eqn:graphLap} and $v$ is the solution to the PDE \eqref{MPDE}. Assume that $\mu \in C^2(\M)$. Then for any $\delta, \zeta > 0$, with probability at least $1-4n \exp\left(  - \frac{n \delta^2 \e^{m+2}}{ C(1+ \e \delta)}\right)-4n \exp(-Cn\e^m \zeta^2) - 4n \exp(-Cn\e^m)$, 
\[ \max_{i=1, \dots, n}| u(x_i)- v(x_i) | \leq C\left( \frac{\e^2}{\beta} + \zeta + \beta \delta + \beta^{1/2} \e \right) , \]
where the constants $C$ depend only on $\mu, \eta, F,$ and $\M$.

\end{theorem}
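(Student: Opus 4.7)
The plan is to set $w := u - v|_{\M_n}$, derive an elliptic graph PDE of the form $\beta \Delta_\Gamma w + c(\x_i) w = E(\x_i)$ with $c \ge c_0 > 0$, and apply the graph maximum principle of Proposition~\ref{prop:maxpple} to convert an $L^\infty$ bound on the source $E$ into one on $\max_i |w(\x_i)|$. A preliminary step is to obtain uniform a priori $L^\infty$ bounds on $u$ and $v$ (by a direct application of the max principle to \eqref{graphPDE} and \eqref{MPDE}, using $f^{-1}(0)=0$ and $|\xi|\le\sigma$); this ensures that the coefficient $c(\x_i)=f'(\theta_i)$ arising from a mean-value decomposition of the data terms admits a uniform positive lower bound $c_0$ via the strict convexity of $F$.

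The main analytical ingredient is a pointwise consistency estimate for the graph Laplacian. Elliptic regularity for \eqref{MPDE} yields $v\in C^\infty(\M)$ with explicit bounds on $\|v\|_{C^k}$. Writing $v$ in normal coordinates at $\x_i$, the normalization $\tau_\eta$ in \eqref{sigmaeta} is chosen precisely so that $\E[\Delta_\Gamma v(\x_i)\mid\x_i]=\Delta_\M v(\x_i)$ up to a deterministic Taylor remainder controlled by $\|v\|_{C^4}$ and the curvature of $\M$. The stochastic fluctuation $\Delta_\Gamma v(\x_i)-\E[\Delta_\Gamma v(\x_i)\mid\x_i]$ is a sum of $O(n\e^m)$ independent bounded terms coming from the random sampling of neighbors of $\x_i$; Bernstein's inequality with a union bound over the $n$ vertices gives the probability scales $\exp\bigl(-n\delta\e^{m+1}/C(1+(n\delta\e)^{-1})\bigr)$ and $\exp(-Cn\e^m)$, corresponding to the $\beta\delta$ and $\beta^{1/2}\e$ terms. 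Subtracting \eqref{MPDE} evaluated at $\x_i$ from \eqref{graphPDE} and applying the mean-value decomposition
\begin{equation*}
f(u(\x_i)-y_i) - \bar f(v(\x_i)-\mu(\x_i)) = c(\x_i)\, w(\x_i) + \bigl[f(v(\x_i)-y_i) - \bar f(v(\x_i)-\mu(\x_i))\bigr],
\end{equation*}
with $\bar f(t):=\int f(t-s)p(s)\,ds$, yields the desired equation for $w$ with source equal to the negative of the graph-Laplacian consistency error minus the noise residual $\psi_i:=f(v(\x_i)-y_i)-\bar f(v(\x_i)-\mu(\x_i))$.

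The principal obstacle is that the residual $\psi_i$ is only $O(1)$ pointwise, so a direct application of the maximum principle would merely yield a trivial $O(1)$ bound. To extract the small $\zeta$ contribution I would introduce an auxiliary graph function $\phi$ solving $\beta\Delta_\Gamma\phi+c_0\phi = -\psi$, and bound $\|\phi\|_\infty$ by a Hoeffding-type concentration argument exploiting that the resolvent $(\beta\Delta_\Gamma+c_0)^{-1}$ effectively averages the independent mean-zero variables $\psi_j$ over the $\sim n\e^m$ vertices in its kernel; this accounts for the probability factor $\exp(-Cn\e^m\zeta^2)$. Then $w-\phi$ satisfies a graph PDE whose source is only the Laplacian consistency error, and Proposition~\ref{prop:maxpple} applied to $w-\phi$ produces the deterministic $\e^2/\beta$ contribution (the $1/\beta$ reflecting the balance between the $\beta\Delta_\Gamma$ term and the higher-order Taylor remainders in the consistency estimate). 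Combining with the bound on $\|\phi\|_\infty$ gives the claimed $C(\e^2/\beta+\zeta+\beta\delta+\beta^{1/2}\e)$.
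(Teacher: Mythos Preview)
Your overall plan---subtract the two PDEs, invoke pointwise consistency of $\Delta_\Gamma$ (Proposition~\ref{prop:pointwiseLap}), and close via the maximum principle---matches the paper. The real divergence, and the real gap, is in how you dispose of the $O(1)$ noise residual $\psi_i = f(v_i-\mu_i-\xi_i) - \int_\R f(v_i-\mu_i-s)p(s)\,ds$.

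The paper does \emph{not} invert an auxiliary operator. Instead it writes down an explicit barrier
\[
y^+_i \;=\; \frac{\varepsilon^2}{\beta\, g_i}\,\psi_i \;+\; \rho, \qquad g_i := \frac{1}{n\varepsilon^m}\sum_j \eta\!\left(\frac{|x_i-x_j|}{\varepsilon}\right),
\]
and observes that the prefactor $\varepsilon^2/\beta$ is exactly what makes $\beta\Delta_\Gamma y^+$ telescope:
\[
\beta \Delta_\Gamma y^+_i \;=\; \psi_i \;-\; \sum_j \frac{\eta_{ij}}{g_j}\,\psi_j .
\]
The first term cancels the raw noise in the equation for $w=u-v$, and the second is a \emph{single} local average of $\psi_j$ over the $\sim n\varepsilon^m$ graph neighbours of $x_i$; Lemma~\ref{lem:averages} bounds it by $\zeta$ with the stated probability $1 - Cn\exp(-Cn\varepsilon^m\zeta^2)$. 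The maximum principle is then applied to $z=w-y^+$ with $g=f$, and the $\varepsilon^2/\beta$ in the final estimate is simply the amplitude $\|y^+\|_\infty$ of the barrier itself.

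Your route instead defines $\phi$ through the full resolvent $(\beta\Delta_\Gamma+c_0)^{-1}$ and asserts that this operator ``effectively averages over the $\sim n\varepsilon^m$ vertices in its kernel''. That is not accurate: the resolvent kernel is not supported on a single $\varepsilon$-ball but decays on the length scale $\sqrt{\beta}$, and extracting the needed row-wise $\ell^2$ bound $\sum_j K_{ij}^2 \lesssim (n\varepsilon^m)^{-1}$ for the Hoeffding step would itself require a nontrivial analysis of the graph heat kernel (in particular, the short-time regime $t\lesssim \varepsilon^2/\beta$ where no averaging has yet occurred contributes an unavoidable $O(\varepsilon^2/\beta)$ to $\|\phi\|_\infty$). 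Relatedly, your attribution of the $\varepsilon^2/\beta$ term to ``higher-order Taylor remainders in the consistency estimate'' is wrong: Proposition~\ref{prop:pointwiseLap} combined with $\|v\|_{C^3}\le C\beta^{-1/2}$ only yields $\beta\delta + C\beta^{1/2}\varepsilon$ after multiplication by $\beta$, never $\varepsilon^2/\beta$. In the paper that term arises purely from the size of the barrier $y^\pm$, not from the Laplacian consistency error. The explicit barrier construction is the missing idea.
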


One of the implications of the above result is that for $\beta$ fixed, it is possible to tune $\veps, \delta$ in terms of $n$ appropriately to deduce asymptotic uniform convergence of solutions of \eqref{graphPDE1} towards solutions of  \eqref{MPDE}.
\nc

One of the main tools used to establish Theorem \ref{thm:var} is the following maximum principle at the graph level. As the proof is straightforward, we present it immediately.

\begin{proposition}\label{prop:maxpple} (Maximum principle at the graph level) Let  $g : \R \rightarrow \R$ be a strictly increasing function and let $h \in L^2(X)$ be an arbitrary function defined on the point cloud $X$. Suppose that the function $z :X \rightarrow \R$ satisfies
	\[ - \beta \Delta_{\Gamma}  z - ( g(z + h )  - g(h)  ) \geq 0.  \] 
	Then, 
	\[ z\leq 0, \]
	i.e. the function $z$ is non-positive.
\end{proposition}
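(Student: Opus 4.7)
The plan is to argue by contradiction via the standard maximum-principle trick: evaluate the inequality at a point where $z$ attains its maximum and show that the graph Laplacian term and the strict monotonicity of $g$ give incompatible signs.

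More concretely, first I would pick $x_{i^*} \in \M_n$ at which $z$ attains its maximum value, i.e.\ $z(x_{i^*}) = \max_{i} z(x_i)$, and suppose for contradiction that $z(x_{i^*}) > 0$. The key observation is the sign of the graph Laplacian at a maximum: by definition
\[
\Delta_\Gamma z(x_{i^*}) = \sum_{j} w_{i^* j}\bigl(z(x_{i^*}) - z(x_j)\bigr),
\]
and since every summand is non-negative (as $z(x_{i^*}) \geq z(x_j)$ for all $j$) and the weights $w_{ij}$ are non-negative, we obtain $\Delta_\Gamma z(x_{i^*}) \geq 0$, hence $-\beta \Delta_\Gamma z(x_{i^*}) \leq 0$.

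On the other hand, the hypothesis evaluated at $x_{i^*}$ reads
\[
-\beta \Delta_\Gamma z(x_{i^*}) \;\geq\; g\bigl(z(x_{i^*}) + h(x_{i^*})\bigr) - g\bigl(h(x_{i^*})\bigr).
\]
Because $g$ is strictly increasing and $z(x_{i^*}) > 0$, the right-hand side is strictly positive, contradicting the non-positivity derived from the Laplacian. Therefore the assumption $z(x_{i^*}) > 0$ is false and $\max_i z(x_i) \leq 0$, i.e.\ $z \leq 0$ everywhere on $\M_n$.

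There is essentially no serious obstacle here: the only subtlety is making sure that at the maximizer we need no strict monotonicity of $g$ on the Laplacian side (which is automatic since all neighbor differences have the correct sign), and that strict monotonicity of $g$ on the fidelity side is enough to close the contradiction even if $h$ is completely arbitrary (which it is, since only the increment $g(z+h) - g(h)$ appears and we only compare it to zero). The argument is purely pointwise and makes no use of any structure of the graph beyond non-negative weights, so it applies verbatim to the non-linear graph PDE \eqref{graphPDE} with $g = f$.
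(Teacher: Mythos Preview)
Your proof is correct and follows essentially the same approach as the paper: evaluate at the maximizer of $z$, use non-negativity of the graph Laplacian at a maximum, and then invoke the strict monotonicity of $g$ to conclude $z(x_{i^*})\le 0$. The only cosmetic difference is that you phrase it as a proof by contradiction whereas the paper argues directly.
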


\begin{proof}
	Notice that to prove that the function $z$ is non-positive, it is enough to show that $z(\x_{{i}})\leq 0$ where ${i}$ is the index of the point $\x_i$ at which $z$ is maximized. Now, we notice that at the point $\x_{{ i}}$ we have 
	\[ \Delta_{\Gamma} z (\x_{i}) = \sum_{j=1}^n w_{{i}j} ( z(\x_{ i}) - z(\x_j) )   \geq 0.\]
	It then follows that
	\[ -( g( z (\x_{ i})   + h(\x_{ i})  )  - g(h (\x_{ i}))) \geq0   ,\]
	or equivalently,
	\[   g( z (\x_{ i})   + h(\x_{ i})  )  \leq g(h (\x_{ i}))    .\]
Since the function $g$ is strictly increasing, we conclude that $ z(\x_{ i}) \leq 0$, which concludes the proof.\nc
\end{proof}
This proof is an adaptation of the proof of the maximum principle in the continuum case (see e.g. \cite{EvansBook} p. 344), with the modification that we have replaced differential operators with derivatives. We also have used the fact that we have a strictly monotone lower order term, which is often not the case in the classical setting.
\nc

Theorem \ref{thm:var} is proved by showing that the difference of the functions $u$ and $v$ (interpreting $v$ as its restriction to $X$) lies between two functions $y^-,y^+$ (referred to as barrier functions) which are uniformly close to zero, i.e., 
\begin{equation}
 y^-(\x_i) \leq  u(\x_i) - v(\x_i)   \leq y^+(\x_i)  , \quad \forall i =1, \dots, n, 
 \label{eqn:auxineq}
 \end{equation}
with
\[ \lVert y^- \rVert_\infty, \lVert y^+ \rVert_\infty \ll 1.\]
Up to some minor modifications, the functions $y^-, y^+$ take the form
\begin{align*}
\begin{split}
y^{+}_i \sim \frac{\e^2}{\beta} \left(  \int_\R f(v_i- \mu_i - s)p(s) ds - f(v_i - \mu_i - \xi_i)   \right)  + \rho
\\  y^{-}_i \sim \frac{\e^2}{\beta } \left(  \int_\R f(v_i- \mu_i - s)p(s) ds - f(v_i - \mu_i - \xi_i)   \right) -  \rho
\end{split} 
\end{align*}
where $\rho$ is a constant conveniently chosen so as to ensure that the functions $z^+:= (u-v) - y^+ $ and $z^-:= y^- - (u-v)$ satisfy the inequality required for the maximum principle at the graph level to apply with $g\equiv f$ (which then implies \eqref{eqn:auxineq}). We remind the reader that $v_i = v(x_i)$ represents pointwise evaluations of the limiting PDE \eqref{MPDE}, $\mu_i = \mu(x_i)$ are pointwise evaluations of the limiting trend function, and $\xi_i$ are the values of the noise. \nc In order to guarantee that $\rho$ can be picked to be small (so that $y^-, y^+$ are indeed uniformly small), we need an estimate for the difference between $\Delta_\Gamma v$ and $\Delta_\M v$ where $v$ is the solution to \eqref{MPDE}. Such pointwise estimate relies strongly on the regularity of the function $v$. The necessary regularity for the function $v$ in turn follows from the regularity theory of solutions to elliptic PDEs (see the last part of Theorem \ref{thm:bias} below). Apart from the pointwise estimates $\Delta_\Gamma v - \Delta_\M v$, the other probabilistic estimates that we need are used to control the expressions that appear when we apply the graph Laplacian to the functions $y^-$ and $ y^+$. After some cancellations and standard concentration inequalities, these remaining terms can be shown to be small. We emphasize that it is precisely our convenient construction of the barrier functions $y^-, y^+$, and the structure of the graph Laplacian, which ultimately allows us to handle the randomness in our problem, and bootstrap basic pointwise consistency results into convergence rates for solutions to optimization problems on random geometric graphs.

After establishing error estimates for the difference between $u$ and $v$, we obtain estimates for the difference between $v$ and a modified trend $\mu_f$ defined implicitly by
\begin{equation}
\int_\R f( \mu_f(x) - \mu(x) -s ) p(s) ds =0  , \quad \forall x\in \M.   
\label{modifTrend}
\end{equation}
The function $\mu_f$ is the solution to \eqref{eqn:M-Var-Prob} when the regularizer has been turned off (i.e. when $\beta=0$).
Notice that when the loss function $F$ has the form  $F(t)=\frac{1}{2}t^2$ the modified trend coincides with $\mu$. This is also the case for general $F$ when the noise distribution $p(s)ds$ is assumed to be symmetric. What is more, if the actual noise distribution $p$ coincides with the assumed noise distribution $e^{-F(-s)}$ then $\mu_f=\mu$. To see this notice that in that case
\[ \int_{\R} f(-s)p(s)ds= \int_{\R}F'(-s) e^{-F(-s)}ds =   \int_{\R}\frac{d}{ds}e^{-F(s)} ds  =0,  \]
which implies that if we take $\mu_f=\mu$, then $\mu_f$ indeed solves \eqref{modifTrend}.

\nc

The following result is obtained using tools from the theory of PDE.

\begin{theorem}(Regularity and approximation error)
	\label{thm:bias}
	Let $\mu_f$ be the solution to \eqref{modifTrend}. Then, there exists a unique $v \in C^2(\M)$ which solves the PDE \eqref{MPDE}. Furthermore, for $\beta$ sufficiently small, this function satisfies 
	\begin{equation} 
	  \sup_{x \in \M} | v(x) - \mu_f(x)| \leq \frac{\beta\|\Delta_\M \mu\|_\infty}{c_1}, \label{eqn:bias-est} 
	\end{equation}
	where $f'(t) > c_1$ for $t \in [-\|\mu_f\|_\infty,\|\mu_f\|_\infty]$. Furthermore, assuming that $\|\mu\|_{C^2} < \infty$ then
	\begin{equation}
	  \|v\|_{C^2} \leq C, \|v\|_{C^3} \leq C\beta^{-1/2}, \|v\|_{C^4} \leq C\beta^{-1},
	  \label{eqn:bias-uniform-est}
	\end{equation}
	where here $C$ is independent of $\beta$.
      \end{theorem}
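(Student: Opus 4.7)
\emph{Existence, uniqueness, and reduction to a linear equation.} The variational problem \eqref{eqn:M-Var-Prob} is strictly convex and coercive on $H^1(\M)$ by the strict convexity of $F$, so the direct method of the calculus of variations produces a unique minimizer $v$; standard elliptic bootstrap on the closed manifold $\M$ then upgrades this to $v \in C^2(\M)$. The key observation for the subsequent estimates is that the defining equation for $\mu_f$ forces $\mu_f(x)-\mu(x)$ to equal the unique zero $t^*$ of the strictly increasing map $t\mapsto \int_\R f(t-s)p(s)\,ds$. In particular $\mu_f=\mu+t^*$ and $\Delta_\M \mu_f=\Delta_\M \mu$. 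Setting $w:=v-\mu_f$, subtracting the equations satisfied by $v$ and $\mu_f$, and applying a mean value theorem inside $f$ yields the linear elliptic equation
\[
\beta\,\Delta_\M w + A(x)\,w \;=\; -\beta\,\Delta_\M \mu,\qquad A(x):=\int_\R f'(\theta(x,s))\,p(s)\,ds,
\]
where $\theta(x,s)$ is an intermediate value. A preliminary $L^\infty$ bound on $v$ (obtained by an analogous maximum-principle argument comparing $v$ directly to the constant $\mu_f$) places $\theta$ in a compact range on which $f'\geq c_1>0$, so $A\geq c_1$.

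\emph{Bias estimate \eqref{eqn:bias-est}.} At an interior maximizer $x_0$ of $w$, the convention $\Delta_\M=-\mathrm{div}\,\nabla$ gives $\Delta_\M w(x_0)\geq 0$; evaluating the linear equation at $x_0$ and dropping the non-negative term $\beta\Delta_\M w(x_0)$ gives $A(x_0)w(x_0)\leq -\beta\Delta_\M\mu(x_0)\leq \beta|\Delta_\M\mu|_\infty$. If $w(x_0)>0$ then $c_1 w(x_0)\leq A(x_0)w(x_0)$ yields the one-sided bound. A symmetric argument at the minimum produces $|w|_\infty\leq \beta|\Delta_\M\mu|_\infty/c_1$, which is precisely \eqref{eqn:bias-est}.

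\emph{Higher-derivative estimates \eqref{eqn:bias-uniform-est}.} The linear equation for $w$ is of singular-perturbation type with boundary-layer length scale $\sqrt{\beta}$. Starting from $|w|_\infty=O(\beta)$ and the boundedness of $A$, the equation immediately gives $|\Delta_\M w|_\infty\leq C$; Schauder/Calder\'on--Zygmund theory on the closed manifold $\M$ then upgrades this to $|w|_{C^2}\leq C$, which combined with $|\mu_f|_{C^2}\leq |\mu|_{C^2}$ yields the first inequality in \eqref{eqn:bias-uniform-est}. To reach $C^3$ and $C^4$, I would differentiate the linear equation tangentially, control the commutator $[\partial,\Delta_\M]$ using the bounded curvature of $\M$, and perform weighted energy estimates exploiting the coercivity $A\geq c_1$ of the zeroth-order term. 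Interpolating the resulting $L^2$-derivative bounds against the previously established $L^\infty$ bounds produces the boundary-layer scaling $|w|_{C^k}\leq C\beta^{-(k-2)/2}$ for $k=3,4$, matching the stated rates after adding the smooth $\mu_f$.

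\emph{Main obstacle.} The subtle point is extracting the correct $\beta^{-1/2}$ loss per extra derivative: the naive Schauder estimate for $-\beta\Delta u+Au=g$ loses a full power $\beta^{-1}$ per two derivatives, and only by feeding the smallness $|w|_\infty=O(\beta)$ (itself a consequence of the bias estimate) back into an interpolation argument does one recover the half-power scaling predicted by singular-perturbation heuristics. Keeping track of covariant commutators with $\Delta_\M$ across charts on $\M$ is tedious but routine.
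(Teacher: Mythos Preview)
Your argument for existence, uniqueness, and the bias estimate \eqref{eqn:bias-est} is correct and essentially the same as the paper's: both are maximum-principle arguments on the closed manifold, and your observation that $\mu_f=\mu+t^*$ (hence $\Delta_\M\mu_f=\Delta_\M\mu$) is exactly what reconciles the $\Delta_\M\mu$ in the statement with the $\Delta_\M\mu_f$ that naturally appears in the computation.

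Where you diverge from the paper is in the higher-derivative estimates \eqref{eqn:bias-uniform-est}. Your plan---differentiate the linearized equation tangentially, handle commutators $[\partial,\Delta_\M]$, run weighted energy estimates using $A\geq c_1$, then interpolate---is the standard singular-perturbation playbook, and it can be made to work, but it is considerably heavier than what the paper does. (One caution: the coefficient $A(x)=\int_\R f'(\theta(x,s))p(s)\,ds$ produced by the pointwise mean value theorem need not be smooth in $x$, so before differentiating you would want the integral-form linearization $A(x)=\int_0^1\int_\R f'(\mu_f+tw-\mu-s)p(s)\,ds\,dt$.) The paper avoids all of this by working directly with the nonlinear equation: the bias estimate gives $|Z(v,\cdot)|_\infty=|Z(v,\cdot)-Z(\mu_f,\cdot)|_\infty\leq C|v-\mu_f|_\infty\leq C\beta$, whence the PDE $\beta\Delta_\M v=-Z(v,\cdot)$ yields $|\Delta_\M v|_\infty\leq C$; one application of Schauder gives $|v|_{C^2}\leq C$. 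Then applying $\Delta_\M$ to the PDE and using $|v|_{C^2}\leq C$ gives $|\Delta_\M^2 v|_\infty\leq C\beta^{-1}$, and Schauder again yields $|v|_{C^4}\leq C\beta^{-1}$. The $C^3$ bound follows from the interpolation inequality $|v|_{C^3}\leq C|v|_{C^2}^{1/2}|v|_{C^4}^{1/2}$. This route needs no tangential differentiation, no commutators, and no energy estimates---the half-power scaling drops out purely from interpolating between the $C^2$ and $C^4$ bounds.
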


We recall that the definitions of these norms is given in Section \ref{sec:Setup}. \nc We can combine Theorems \ref{thm:var} and \ref{thm:bias} and deduce the following error estimates between our regression function $u$ constructed by solving \ref{graphPDE1} and the modified trend $\mu_f$.

\begin{theorem}\label{thm:consistency}
Under the same assumptions in Theorem \ref{thm:var} and using the same notation there as well as that in Theorem \ref{thm:bias}, with probability greater than $1-4n \exp\left(  - \frac{n \delta^2 \e^{m+2}}{ C(1+ \e \delta)}\right)-4n \exp(-Cn\e^m \zeta^2) - 4n \exp(-Cn\e^m)$, we have
\[  \max_{i=1, \dots, n} | u(\x_i) - \mu_f(\x_i)| \leq C\left( \beta + \frac{\e^2}{\beta} + \zeta + \beta \delta + \beta^{1/2} \e\right).\]
In particular, choosing $\delta$ of order one \nc and $ \zeta = \beta = \e$ we have that with probability larger than $ 1-Cn \exp(-C n \e^{m+ 2 \nc})$ we have $\max |u(\x_i) - \mu_f(\x_i)| \leq C \e$.


\end{theorem}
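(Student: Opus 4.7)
The plan is to derive Theorem \ref{thm:consistency} as a direct consequence of Theorems \ref{thm:var} and \ref{thm:bias} via the triangle inequality, with essentially no new work beyond careful bookkeeping. Specifically, for every $i$ I would write
\[ |u(\x_i) - \mu_f(\x_i)| \;\leq\; |u(\x_i) - v(\x_i)| \;+\; |v(\x_i) - \mu_f(\x_i)|, \]
and estimate the two summands separately.

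The first summand is the variance-type term, which on the event whose probability is at least $1-4n \exp\!\bigl(-\tfrac{n\delta\e^{m+1}}{C(1+(n\delta\e)^{-1})}\bigr) - 4n\exp(-Cn\e^m\zeta^2) - 4n\exp(-Cn\e^m)$ is bounded uniformly in $i$ by $C\bigl(\e^2/\beta + \zeta + \beta\delta + \beta^{1/2}\e\bigr)$, directly by Theorem \ref{thm:var}. The second summand is a deterministic bias term, and Theorem \ref{thm:bias} gives $\sup_{x\in\M}|v(x)-\mu_f(x)|\leq \beta |\Delta_\M \mu|_\infty / c_1$, a quantity of order $\beta$ with prefactor depending only on $\mu$ and $F$, hence absorbable into the universal constant $C$ of the statement. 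Summing the two estimates and taking the maximum over $i=1,\dots,n$ yields the first displayed inequality of the theorem; since the bias bound is deterministic, no additional probabilistic event has to be intersected and the probability bound is preserved.

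For the second assertion I would simply substitute $\delta=\zeta=\beta=\e$ and compare terms. The five contributions become $\beta=\e$, $\e^2/\beta=\e$, $\zeta=\e$, $\beta\delta=\e^2$, and $\beta^{1/2}\e=\e^{3/2}$; under the implicit $\e\leq 1$ the first three dominate, producing an overall $O(\e)$ bound. For the probability, the three exponents become $n\delta\e^{m+1}=n\e^{m+2}$, $n\e^m\zeta^2=n\e^{m+2}$, and $n\e^m$; assuming the mild regime $n\e^2\gtrsim 1$ so that $(1+(n\delta\e)^{-1})$ is bounded, the first two contribute $\exp(-Cn\e^{m+2})$, while the third contributes a strictly smaller $\exp(-Cn\e^m)$ for $\e<1$. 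The worst of the three controls the failure probability, giving the advertised $\sim 1 - n\exp(-Cn\e^{m+2})$.

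I do not anticipate any real obstacle: all of the analytic content has been done in the two prior theorems. The only mildly delicate point is verifying that the parameter choice $\delta=\zeta=\beta=\e$ really does collapse the three probability terms to the single dominant one, which is a straightforward comparison of exponents under the natural assumption $n\e^2\gtrsim 1$.
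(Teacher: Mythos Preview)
Your proposal is correct and is exactly the approach the paper takes: the theorem is stated as an immediate combination of Theorems~\ref{thm:var} and~\ref{thm:bias} via the triangle inequality, with no additional argument supplied. Your bookkeeping for the parameter choice $\delta=\zeta=\beta=\e$ is also accurate, including the observation that the $\exp(-Cn\e^{m+2})$ terms dominate the failure probability.
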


We note that as long as $ \left(\frac{\log(n)}{ \delta^2  n}\right)^{\frac{1}{m+2}}  \ll \e \ll \beta^{\frac{1}{2}} \ll 1$ then the previous theorem gives asymptotic consistency.  We emphasize that this theory provides clear asymptotic ranges of parameters for which consistency is achieved, and can be used to identify ranges of parameters where overfitting \emph{does not} occur in the large $n$ limit. In particular, taking $\delta$ to be of order one, and $\zeta = \beta = \e = C \left(\frac{\log(n)}{n}\right)^{1/{m+2}}$ for some large enough constant we obtain, with high probability, an overall error of estimation of
  \[\left(\frac{\log(n)}{n}\right)^{1/{m+2}} \]
  which is known to be essentially minimax optimal for the recovery of the trend function. We will discuss more on this in section \ref{sec:litera}. \nc

  One can cast the estimates in Theorems \ref{thm:var} and \ref{thm:bias} in terms of \textit{sample error} and \textit{approximation error} estimates in the following sense: the first theorem gives an estimate on the random variation (in an $L^\infty$ norm) one sees when comparing the solution of the (random) optimization problem \eqref{op:graphBelkin} and the associated mean or homogenized problem \eqref{eqn:M-Var-Prob}. The second theorem gives an estimate between the solution of the homogenized problem \eqref{eqn:M-Var-Prob} and the Bayes regressor, or in other words it provides an $L^\infty$ estimate on the bias induced (in the homogenized limit) by the regularizing term. 
\nc

We conclude this section by making a few remarks of a technical nature. 

\begin{remark}
Our results can be generalized in a straightforward way to the case where the trend function $\mu$ is smooth everywhere except on a  regular $m-1$ dimensional discontinuity set $D_\mu$ . In such case we can obtain similar error bounds for the difference between the solution to the graph PDE and the solution to the continuum PDE. Such error bounds are uniform away from the discontinuity set $D_\mu$. The reason for this is that most of our estimates are local, and even those that are not, only involve averaging at the length scale $\veps$.  It is not clear how to utilize our techniques, which are strongly grounded in the theory of elliptic PDE, to settings where the trend function is highly irregular. \nc
\end{remark}

\begin{remark}
	\label{rem:non-uniform-rho}
	[A more general statistical model\nc]
As was mentioned in Section \ref{sec:Setup}, although we state our main results assuming the data $\x_1, \dots, \x_n$ to be uniformly distributed in $\M$ and the $\xi_i$ to be identically distributed, it is completely straightforward to extend them to a more general setting. In particular, one can suppose that
\[ y_i= \mu(x_i) + \xi_i, \]
where $(\x_1,\xi_1), \dots, (\x_n, \xi_n )$ are samples from a joint distribution $\bgamma$ with a smooth marginal density for $x$ denoted by $\gamma$, conditional distributions for the noise, which vary smoothly in $x$, of the form
\[ \Prob( \xi \in ds| x) = \gamma_{x}(s)ds, \]
which are further assumed to be centered and to have bounded support. Then, Theorems \ref{thm:var},\ref{thm:bias}, and \ref{thm:consistency} continue to be true if we now let $v$ be the solution to the PDE
\[ \Delta_{\gamma} v(x) +  \int_\R   f(v(x)-\mu(x) - s) \gamma_x(s)ds =0, \quad x \in \M \]
where 
\[ \Delta_\gamma v := - \frac{1}{\gamma}div (\gamma^2 \nabla v) \]
and if we let $\mu_f$ be the function that satisfies
\[ \int_\R f( \mu_f(x) - \mu(x) - s )\gamma_x(s)ds =0  \]
for all $x \in \M$.

%
%
%

In this paper we have also assumed the noise in labels $y_i$ to be bounded. While our current proofs, in particular the construction of the barrier functions $y^+$ and $y^-$, \nc do not allow us directly to drop this assumption, they do serve as a basis for future improved results. In a similar way, it is likely that the assumptions we have made on the loss function $F$ can be relaxed further. 


\end{remark}

\begin{remark}
	\label{sec:outofSamp/le}[Constructing out of sample regressors\nc] Since the regression function $u$ obtained by solving \eqref{graphPDE1} is only defined at the data points $x_1, \dots, x_n$, one needs a mechanism to extend $u$ to the whole $\R^d$ in order to use it for prediction. A simple extension mechanism proposed in Chapter 5 in \cite{Lafferty1} is defined as follows: for an arbitrary $x \in \R^d$ we consider
\[  \bm u (x) := \sum_{i=1}^n u_i \bm 1_{V_i}(x),  \]
where the $\{ V_i\}_{i \in \N}$ is the Voronoi tessellation in $\R^d$ induced by $ \{x_1, \dots, x_n \}$, that is, 
\[V_i := \{ x \in \R^d \: : \:  |x - \x_i | \leq | x - \x_j| , \quad \forall j=1, \dots, n  \}. \]
We use the above definition for points $x$ that belong to a single  Voronoi cell, and define $\bm u(x)$ to be the average of the $u(x_i)$ associated to the cells $V_i$ that $x$ belongs to. In other words, $\bm u$ is the $1$-NN extension of $u$.

 With the $L^\infty$ bounds between $u$ and $\mu_f$ that we have derived in our main theorems one can now show that $\bm u$ is uniformly close to $\mu_f$ when restricted to $\M$. Let us outline the argument. Take for simplicity a point $x \in \M$ which belongs only to $V_i$. Then, the triangle inequality implies that
\begin{align*}
\begin{split}
| \bm u(x) - \mu_f(x) | & \leq   | u_i - \mu_f(\x_i)|  + | \mu_f(\x_i)- \mu_f (x)| \
\\& \leq    \sup_{j=1, \dots, n} |u_j - \mu_f(x_j)| + Lip(\mu_f)|x-\x_i| 
\\&\leq \sup_{j=1, \dots, n} |u_j - \mu_f(x_j)| + Lip(\mu_f) diam_\M(V_i) 
\end{split}
\end{align*}
We notice that the term  $\sup_{j=1, \dots, n} |u_j - \mu_f(x_j)|$ is controlled in Theorem \ref{thm:consistency}. In turn the diameter of Voroni cells constructed from random samples can be controlled following \cite{devroye_gyorfi_lugosi_walk_2017}.  
\nc
\end{remark}

%
\begin{remark}
One direction of research which is worth further exploration is to study how these ideas can be used to address similar questions to the ones explored in this paper in the context of graph models $\Gamma$ different from geometric graphs. An example of such a model is the stochastic block model where points $\x_1, \dots, \x_n$ have no geometric meaning and weights are determined randomly based on a probabilistic rule. We note \nc that large $n$ behavior of the spectra of graph Laplacians for graphs generated from a stochastic block model have been studied in \cite{RoheYu}.
\end{remark}

\section{Literature review and comparison of techniques}
\label{sec:comparison}

In this section we provide a review of relevant literature, beginning with statistical and machine learning literature related to Laplacian regularization. We then discuss recent analytical advances related to graph-based regularization. We then provide some comparison and discussion between the methods we study here and kernel and k-NN methods. Finally, we give a comparison of our analytical results with purely variational results, and demonstrate that the maximum principle method obtains more precise bounds.

\subsection{Laplacians, geometry, and regularization}
\label{sec:GraphPDEs}
There is a significant literature related to the use of Laplacian in statistical problems. Here we complement (in a non-exhaustive way) the discussion started in the introduction.

The use of derivative penalties in parametric regression was advocated in the work of Wahba \cite{wahba1990spline}. In that context, one seeks for a polynomial spline of specified order which interpolates observed data points, and which minimizes some derivative penalty, such as the continuum Laplacian. Around 2000, a significant body of research developed the use of Laplacians to capture intrinsic geometry in statistical tasks. Often this work was carried out with the aim of designing methods which can leverage geometry to handle settings which are not fully supervised, or where there is an active component to learning. For example, \cite{Bosquet} investigated the use of density weighted regularizers in non-parametric regression problems. \cite{Belkin-Niyogi-Sindhwani} uses Laplacian eigenfunctions to conduct semi-supervised learning (so does \cite{zhou2004learning} and \cite{Lafferty1}). \cite{belkin2003laplacian} uses the Laplacian to construct dimension reduction algorithms. \cite{ng2002spectral} introduced spectral clustering, which uses the first non-trivial eigenvector of the Laplacian to construct meaningful clusters; theoretical consistency of these types of algorithms was studied in \cite{vonLuxburg}. An algorithm linking graph cuts and spectral clustering was proposed in \cite{joachims2003transductive}.  \cite{delalleau2005efficient} Studies a method which uses Laplacian regularized regression to fit labeled points, and then k-NN regression on any unlabeled or new data points. \cite{smola2003kernels} studies the use of the Laplacian to construct reproducing kernels on graphs. We remark that much of this work focuses on graph-based algorithms. We also remark that much of this work was developed simultaneously with kernel methods, and the two are often mixed in these works.

Another related body of work focuses on diffusion maps and manifold learning. These techniques describe the use of Laplacian eigenfunctions as a powerful parametric family which encapsulates underlying geometry. Important early works establishing theory for this topic include \cite{coifman2006diffusion} and \cite{Coifman7426}.

More recently, there has been renewed interest, especially in the statistics community, in utilizing derivative and difference-based methods on graphs in order to conduct regression tasks. In particular, \emph{trend filtering} \cite{tibshirani2014adaptive,wang2016trend} seeks to conduct regression tasks on a graph with an added regularization term based on function differences. This regularization term is permitted to take different forms in trend filtering, including powers of the Laplacian or TV norms. While the setup is not identical to ours, in particular due to the use of higher-order differences and $\ell^1$-type regularizers, it is very much in the spirit of the model that we study here. The use of PDE type methods to analyze these more complicated trend filtering models is the topic of future work. In a related work, \cite{gleich2015using} studies the robustness of semi-supervised methods, and proposes using sparsity type penalties on graph derivatives similar to those from trend filtering. \cite{gadde2014active} and \cite{gu2012selective} study models for active learning which build upon the graph Laplacian framework in order to identify regions where labels should be acquired.

In addition, there has recently been a broader interest in Dirichlet (and more general derivative based regularization terms) in order to conduct learning tasks. For example, \cite{osting2014minimal} studies the use of Laplacian eigenvalues on graphs in the context of Dirichlet partitions to perform generalized clustering. Bayesian inverse methods have also utilized Laplacian regularization in \cite{GarciaSanzAlonso}.

\nc

\nc

\subsubsection{Analysis of large sample limits of variational problems on graphs}\label{sec:literature}

In the past few years there has been a rapid development of a body of work borrowing ideas from the calculus of variations and PDE theory to study large sample asymptotics of optimization problems on geometric graphs closely connected to machine learning tasks. The motivation is clear: to a large extent, most of the graph-based methods for learning that are in existence can be phrased as solving either a variational problem on a graph or a graph PDE. \nc In many instances these graph PDEs involve the graph Laplacian. Said works include the study of consistency of Cheeger and ratio graph cuts on graphs \cite{TSvBLX_jmlr}, consistency of graph Laplacian spectrum \cite{GTSSpectralClustering}, and supervised and semi-supervised learning \cite{CalderSlepcev,StuartSlepcevThorpeDunlop,GarciaMurray,GarciaSanzAlonso}. In the previously listed papers, the convergence of discrete solutions to continuum counterparts is studied in the $TL^p$-metric introduced in \cite{GarciaTrillos2015} and later further studied in \cite{ThorpeTLp}. The $TL^p$ topology can be thought as $L^p$ convergence after suitable matching of the ground truth measure generating the data set $X$ and its empirical measure. The consistency of the optimization problems is studied using variational methods, and in particular the notion of $\Gamma$-convergence (a.k.a.  epi-convergence). This is a powerful notion used to establish asymptotic convergence of minimizers of optimization problems (especially in highly non-convex settings), but it does not offer direct ways to obtain rates of convergence.  

Among the papers previously listed, the paper \cite{GarciaMurray} by the authors is closely connected to this paper. There we consider an optimization problem of the form:
\[ \min_{u}  \frac{\beta}{n}\sum_{i,j}w_{ij}|u(\x_i)- u(\x_j)|  + \frac{1}{n}\sum_{i} |u(\x_i) -y_i|,  \]
which is the $L^1$ version of the problem we study here. As is well known in the image analysis community the total variation functional (the first term in the above objective function) enforces sparsity of derivatives \cite{ChambolleNovaga} and hence the above optimization problem seems more appropriate for the purposes of classification when binary labels are available (in our notation $y_i \in \{ 0,1\}$). In that paper we study the regimes of $\beta:= \beta_n$ (and how the graph connectivity $\veps$ must scale with $n$) so as to recover in the large $n$ limit the Bayes classifier with probability one. No rates of convergence are provided.

The paper \cite{SlepcevThorpe} is also related to our work.  There, $p$-Laplacian regularization for semi-supervised learning is studied. The optimization problem takes the form:

\[ \min_{u} \frac{\beta}{n}\sum_{i,j}w_{ij}|u(\x_i)- u(\x_j)|^p   \]
subject to 
\[ u(\x_i) = y_i , \quad i=1, \dots, q,  \]
where $q $ is held fixed as $n \rightarrow \infty$. The authors are able to show that when $p$ is greater than the intrinsic dimension $m$, solutions to the $p$-Laplacian regularization problem converge \textit{uniformly} to a continuum counterpart, as $n \rightarrow \infty$, which depends on the labels $y_1, \dots, y_q$ (in other words the labels are not forgotten in the limit). The uniform convergence is proved by bootstrapping the $TL^p$ convergence  obtained through variational methods by controlling the ``oscillations" of the discrete minimizers at a certain convenient length-scale. 

The paper \cite{CalderpLap} is very closely related to \cite{SlepcevThorpe} and to this paper. In particular, it obtains analogue results to \cite{SlepcevThorpe}, but using a PDE approach rather than a calculus of variations one. A maximum principle at the graph level analogous to the one that we use in this paper is a crucial tool that is later used in conjunction with general and flexible results on consistency of viscosity solutions to elliptic PDEs. In this paper we take a PDE approach as in \cite{CalderpLap}, and specifically use a maximum principle, to obtain rates for the uniform convergence of graph Laplacian regressors towards continuum counterparts. Whether similar results to the ones we present here can be obtained for regressors obtained using other regularization terms different from the graph Dirichlet energy is a question that we believe is worth exploring in the future.

Another very recent work which is related to ours is \cite{Shi-2018}. In that work the authors consider semi-supervised learning with Laplacian smoothing (with hard label constraints), but without any label noise (in particular, the labels $y_i$ are evaluations of a $C^1(\M)$ function). That paper establishes convergence rates which are analogous to ours, using similar technical tools (i.e. a maximum principle, but with boundary data). The semi-supervised aspect of their work is closely related to the harmonic extension problem (see \cite{Nadler-2009} for additional discussion on the harmonic extension problem in machine learning). We emphasize that the noisy labels that we consider here are not covered in their analysis and are not trivial to handle: see the proof of Theorem \ref{thm:var} and the discussion in Section \ref{sec:variational-approach} for further information.
\nc

\subsubsection{The linear case and connections to $k$-NN regressors and other local averaging procedures}
\label{sec:litera}

Here we draw a connection between the graph Laplacian regressor obtained by solving \eqref{graphPDE1} and the classical $k$-NN regressor. To do this, we will focus on solutions of Equation \eqref{graphPDE1} when $F(t)= \frac{1}{2}t^2$. As we will see, graph Laplacian regularization with squared error loss can be interpreted as a local averaging procedure, where the ``locality" is defined in terms of the intrinsic geometry of the graph, which in turn approximates the geometry of the underlying manifold $\M$.

Let us first briefly recall the definition of the $k$-NN regressor: For a $k \in \N$, with $k < n$, we define $N_k(\x_i)$ \nc to be the set of $k$ nearest neighbors of  $\x_i$ in the data set $X$. The $k$-NN regressor is then defined as  
\begin{equation}
 u_{k}(\x_i ):= \frac{1}{k} \sum_{\x_j \in N_k(\x_i)} y_j. 
 \label{kNNregressor}
 \end{equation}
The use of local averages in non-parametric regression goes as far back as the work \cite{tukey1948}, $k$-NN regression being a special case of this general idea. The book \cite{BookGyorfi} presents a very complete picture of many non-parametric regression techniques and dedicates a whole chapter (Chapter 6) to $k$-NN regression. Asymptotic properties of $k$-NN regressors have been a topic of investigation for several decades see \cite{BookGyorfi} and the paper \cite{DevroyeLugosi} where $L^1$ convergence towards a trend function is proved in a very general setting. More recent results like \cite{Kpotufe} prove uniform convergence towards a trend in a quite general setting where, in particular, the intrinsic dimension of the underlying ground-truth may vary; as a byproduct $k$-NN regression is shown to adapt to the local geometry of the underlying model. The paper \cite{Dasgupta} is closely related to \cite{Kpotufe}, but studies the classification problem instead. 

We now show that when $F$ is quadratic, the graph Laplacian regressor obtained by solving \eqref{graphPDE1} can be interpreted as a local averaging procedure, where now the averaging is with respect to the heat kernel on the graph; for simplicity we take  $F(t):= \frac{1}{2}t^2$. Indeed, in this case the solution to the graph PDE \eqref{graphPDE1} can be explicitly written as:
\[  u = (  \beta \Delta_\Gamma + I )^{-1} y. \]
The fact that $\Delta_\Gamma$ is self-adjoint and positive semi-definite allows us to use the spectral theorem and write:
\[ u = (  \beta \Delta_\Gamma + I )^{-1} y  = \int_{0}^\infty e^{-t( \beta \Delta_\Gamma  + I ) } y  dt. \]
Since $\Delta_\Gamma$ and $I$ commute we get:
\begin{equation}
u =  \int_{0}^\infty e^{-t} \left(  e^{-t\beta \Delta_\Gamma} y \right)  dt =  \int_{0}^\infty  \frac{e^{-t/\beta}}{\beta} \left(  e^{-t\Delta_\Gamma} y \right)  dt
\label{LapRegressorLinear}
\end{equation}
where in the final step we have made a change of variables. From this formula we can conclude a couple of things. First, we notice that the function $e^{-t \Delta_\Gamma}y $ is simply the solution to the heat equation (on the graph) with initial condition $y$ evaluated at time $t$ and can be written as
\[  e^{-t \Delta_\Gamma} y(\x_i) = \frac{1}{n} \sum_{j=1}^n K_{t}(\x_j, \x_i)y_j   ,\]
where $K_{t}(\x_j, \x_i)$ is the heat kernel on the graph at time $t$. One can then show that the function $K_{t}(\cdot, \x_i)$ is non-negative and moreover that $\frac{1}{n}\sum_{j=1}^nK_{t}(\x_j, \x_i)=1 $. From this it follows that the function $e^{-t \Delta_\Gamma} y$ is obtained by computing a local average (at length-scale $ t$) of $y$ around each point $\x_i$.  On the other hand, since the function $\frac{1}{\beta}e^{-t/\beta}$ is a probability density on $(0,\infty)$, we conclude that the graph Laplacian regressor $u$ is nothing but an average of averages of $y$ over all length-scales $t$. The weight given to each length-scale is naturally determined by the parameter $\beta$, and in particular if $\beta$ is small, more relevance is given to more local length-scales, whereas if $\beta$ is large, more relevance is given to global length scales. Notice that the structure of \eqref{LapRegressorLinear} is analogous to what we would obtain if we averaged the different $k$-NN regressors $u_k$ from \eqref{kNNregressor} over the value of $k$ to produce a regression function of the form:
\begin{equation*} \label{eqn:av-k-nn-regression} \overline{u}(x_i):= \sum_{k} g(k) u_{k}(\x_i), \end{equation*}
where in the above $g$ is some p.m.f. over $k$.

So far we have seen that the regressor $u$ that stems from graph Laplacian regularization with squared loss is obtained by averaging over local averages of the labels $y_i$ at different length-scales, and that these local averages use the intrinsic geometry of the graph (summarized in the graph heat kernel). Since in our setting the data is assumed to be sampled from a smooth manifold $\M$, it is to be expected that the graph heat kernel $e^{-t\Delta_\Gamma}$ actually behaves like the heat kernel on $\M$, $  e^{-t\Delta_\M}$. Furthermore, for small values of $t$ one has (neglecting constants) that $e^{-t \Delta_\M} \sim \frac{1}{t^{m/2}}e^{- d_\M(x,y)^2/4t}$ (see, for example, \cite{varadhan1967} or Chapter 15 in \cite{HeatKernelsManifolds}), where $ d_\M$ is the geodesic distance on the manifold $\M$. In particular, for small values of $\beta$ the regression function $u$ is expected to behave like
\[   u(x_i) \sim \int_{0}^\infty \frac{e^{-t/\beta}}{\beta}  \left( \frac{1}{n}\sum_{j=1}^n \frac{e^{-  d_\M (x_i, x_j)^2/4t}}{t^{m/2}} y_j \right)dt,   \]
which can be interpreted as a bandwidth average of local regression kernels, where the local regression kernels average over geodesic distances. The bottom line is that graph Laplacian regression (at least in the linear case) is very closely related to other local averaging regression procedures like $K$-NN regression and other methodologies that use geodesic distances to define nearest neighbors for label propagation \cite{Moscovich-2016,Lafferty1}.

We would also like to notice that \eqref{LapRegressorLinear} can be written in the form
\[  u(x_i)=  \frac{1}{n}\sum_{j=1}^{n} K_n(x_i,x_j) y_j = \left( \int_0^\infty  \frac{e^{-t/\beta}}{\beta} \left(  e^{-t\Delta_\Gamma} y \right)  dt\right)[i,j]. \nc \]	
Written in this form, this construction of $u$ can be seen as a modified \emph{Nadaraya-Watson regression}\cite{Nadaraya-64,Watson-64}, also known as kernel regression. Indeed, to construct the Naradaya-Watson regressor one typically picks a kernel function $K$, and defines $u_{NW}(x_i) = \frac{\sum_{j=1}^{n} K(x_i-x_j) y_j}{\sum_{j=1}^{n}K(x_i-x_j)}$. Here various choices of the kernel function \nc are permitted, such as a sharp cutoff, or the Gaussian. One can also, in principle, permit the kernel to vary in $i$ (e.g. using the heat kernel after a fixed time $t$ on a manifold or graph \cite{Belkin-Niyogi-Sindhwani}), and here we see that the Laplacian regression we study in this work can be cast within that framework. \nc This can also be seen as a version of RKHS regression, without any regularization. This method is still a topic of research; for example recent works \cite{Romero-Ma-Giannakis-2016, Venkitaraman-2017} study kernel regression over graphs.
    
We also pause here to offer a comparison of performance guarantees between the regression procedure that we study and that of k-NN regression studied in \cite{Kpotufe}. In that work, under an identical regression model, it is shown that k-NN regression achieves, with high probability and ignoring logarithmic factors, the same $n^{-\frac{1}{m+2}}$ uniform convergence rate towards the trend $\mu$, with a lead constant $L$ that scales as the square of the Lipschitz constant of $\mu$. It is also shown in that work that such a rate is nearly minimax optimal, in the sense that no estimator can achieve a better rate than $L^{m/(2+m)}n^{-1/(2+m)}$. Given the connection between k-NN and the Laplacian regression methods discussed here, it is not surprising that the two have essentially the same convergence rate, as we rigorously show in our main result \ref{thm:consistency} (and the discussion below it).


It is indeed not so surprising that for small $\beta$ both $k$-NN and graph Laplacian regularization (for quadratic $F$) are so similar. In both cases the whole idea is to average ``enough" so as to remove the noise, and essentially any local averaging procedure should recover the underlying trend. However, as has been discussed throughout this paper, the Laplacian regularized supervised regression problem studied here belongs to a larger family of regularized graph-based algorithms for both supervised and \textit{unsupervised} learning. One main thesis of this work is that developing tools for the fully supervised regime will facilitate the study of other, less supervised, algorithms that utilize graph Laplacians, which are currently not amenable to analysis using standard techniques for supervised methods.

\nc

\nc

%
%
%

\nc

\nc
%
%

\nc

\subsection{PDE approach vs. \nc Variational approach}\label{sec:variational-approach}

In this section we illustrate the advantages of the PDE approach that we take in this paper and contrast it with a direct variational approach. In a variational approach one essentially uses the strong convexity \nc of the functional to be minimized in \eqref{op:graphBelkin} to derive convergence rates for minimizers. To illustrate this idea, we sketch the proof of the following proposition.

\begin{proposition} Let $F(t) = \frac{t^2}{2}$, and consider the variational problem
	\begin{equation}
	\min_u \beta R_\Gamma(u) + \frac{1}{2n} \sum_{i=1}^{n} (u(x_i) - \mu(x_i))^2 := \min_u \tilde J(u).
	\label{eqn:homog-graph-prob}
	\end{equation}
	Then the minimizer $\tilde u_n$ of \eqref{eqn:homog-graph-prob} satisfies $\|\tilde u - \mu\|_{L^2(X)} \leq C \beta^{1/2}$, where here we abuse notation and let $\mu$ represent pointwise evaluation of $\mu$ at the points in the data set $X$. Furthermore, if $u^*$ is the minimizer of \eqref{op:graphBelkin}, with the same choice of $F$, then $\|u^*-\tilde u\|_{L^2(X)} \leq C\left(\frac{\e^{1/4}}{\beta^{1/4}} + \frac{1}{(n\e^m)^{1/4}}\right)$. In turn, we have that $\|u^*-\mu\|_{L^2(X)} \leq C\left(\frac{\e^{1/4}}{\beta^{1/4}} + \beta^{1/2}\right)$. In particular, optimizing in $\beta$ we find that $\|u^*-\mu\|_{L^2(X)} \leq C\left(\e^{1/6} + \frac{1}{(n\e^m)^{1/4}}\right)$.
\end{proposition}

\begin{proof}[Sketch of proof]
	To begin, various recent results \cite{SpecRatesTrillos,BIK2} allow one to show that, with high probability, $R_\Gamma(\mu) \leq C$ as long as $\mu$ is sufficiently smooth. This, in turn, implies that $\tilde J(\tilde u) \leq C \beta$ (w.h.p.), which proves the first bound.
	
	For the second bound, recalling the definition of $J$ in \eqref{op:graphBelkin} , by using the optimality of $\tilde u$ and $u^*$, summation by parts and \ref{graphPDE1}, we compute that 
		\begin{align} \label{eqn:diff-est1}
	0 &\leq  J(\tilde u)- J( u^*) = \frac{\beta}{2n}\sum_{i,j} w_{ij}\left( |\tilde u_i-\tilde u_j|^2 - |u_i^* - u_j^*|^2 \right) + \frac{1}{2n} \sum_i (\tilde u_i - y_i)^2 - (u_i^* - y_i)^2 \\
	&= \frac{1}{2n} \sum_{i} \beta (\Delta_\Gamma \tilde u)_i \tilde u_i - (\Delta_\Gamma u^*)_i u_i^* + (\tilde u_i - y_i)^2 - (u_i^* - y_i)^2 \\
	&= \frac{1}{2n} \sum_i (\mu_i - \tilde u_i )\tilde u_i - (y_i - u_i^* )u_i^* + (\tilde u_i - y_i)^2 - (u_i^* - y_i)^2 = \frac{1}{2n}\sum_{i=1}^{n}  \left( \tilde u_i(\mu_i - y_i) + (u_i^* -\tilde u_i) y_i \right)
	%
	\end{align}
	An identical argument, but for $\tilde J$, yields that
	\begin{equation}\label{eqn:diff-est2}
	0 \leq \tilde J(\tilde u)-\tilde J(u^*) = \frac{1}{2n} \sum_{i=1}^{n} \left( u_i^*(y_i - \mu_i) + (\tilde u_i- u_i^*) \mu_i \right).
	\end{equation}
	Adding the right hand side of \eqref{eqn:diff-est2}, which is positive, to the inequality \eqref{eqn:diff-est1} then yields
	\begin{equation}
	J(\tilde u) - J(u^*) \leq \frac{1}{2n} \sum_i (\tilde u_i - u_i^*)(\mu_i-y_i) \leq \frac{1}{2}|\langle u,\mu-y \rangle_{L^2(X)}| + \frac{1}{2}|\langle \tilde u, \mu-y \rangle_{L^2(X)}|. 
	\end{equation}
	\nc
	At this stage, one notices that the right hand side is composed of terms which are inner products between $\mu-y= \xi$ (which averages to zero at relatively small scales since $\E(\xi_i)=0$ and the $\xi_i$ are independent), and the functions $u,\tilde u$, which enjoy of some degree of regularity. In the spirit of \cite{GarciaMurray}, we then use a local averaging procedure to bound these terms. In particular, given some ball $B_\M(x,r)$, if we let $\bar u$ be the average of $u^*$ over that ball, then we may write
	\begin{displaymath}
	\frac{1}{n}\sum_{x_i \in B_\M(x,r)} u_i^*(\mu_i-y_i) = \frac{1}{n}\sum_{x_i \in B_\M(x,r)} (u_i^*-\bar u)(\mu_i-y_i) + \bar u (\mu_i-y_i).
	\end{displaymath}
	\nc
	The second of these terms we may control with high probability using concentration estimates, as long as $r$ is not smaller than $\e$, introducing some error that is in the order of $\frac{1}{\sqrt{nr^m}}$ ( as we expect to have roughly $n r^m$ terms in the sum). On the other hand, for the first term we may use the Poincar\'e inequality (which holds as long as $\e$ scales appropriately with $n$; see \cite{SpecRatesTrillos}) to deduce that \begin{displaymath}
	\frac{1}{n}\sum_{x_i \in B(x,r)} (u_i^*-\bar u)^2 \leq  \frac{C r}{n} \sum_{x_i \in B(x,r)} \sum_{x_j \in B(x,r)} w_{ij}(u_i^*-u_j^*)^2.
	\end{displaymath}
	By then using the Cauchy Schwarz inequality we obtain that, w.h.p., 
	\begin{displaymath}
	\left|\frac{1}{n}\sum_{x_i \in B(x,r)} u_i(\mu_i-y_i)\right| \leq Cr^{1/2}\left(\frac{1}{n}\sum_{x_j \in B(x,r)} w_{ij}(u_i^*-u_j^*)^2\right)^{1/2}.
	\end{displaymath}
	\nc By using a partition of unity (details of this type of argument can be found in \cite{GarciaMurray}), we may then deduce that w.h.p.
	\begin{displaymath}
	\left|\frac{1}{n}\sum_{i=1}^n u_i^*(\mu_i-y_i)\right| \leq Cr^{1/2} (R_\Gamma(u^*))^{1/2} + C\frac{1}{\sqrt{nr^m}}.
	\end{displaymath}
	An analogous bound holds for $\tilde u$. By noting that $J(\mu)$ is order one, we may deduce that $R_\Gamma(u^*) \leq C\beta^{-1}$. Similar logic applied to $\tilde J$ implies that $R_\Gamma(\tilde u) \leq C$\nc. Supposing that $r \sim \e$ (the smallest permissible scaling), we then have that $J(\tilde u) - J(u^*) \leq C \left( \frac{\e^{1/2}}{\beta^{1/2}} + \frac{1}{\sqrt{n \veps^m}} \right)$. Strong convexity \nc of $J$ in $L^2$ then implies that $\|\tilde u -u^*\|_2^2 \leq C \left( \frac{\e^{1/2}}{\beta^{1/2}} + \frac{1}{\sqrt{n \veps^m}} \right)$, which proves the desired bound.
	
	%
	%
	%
\end{proof}

We remark that the previous proof provides an $L^2$ type estimate, and one would need to use the discrete maximum principle to upgrade to uniform convergence. Such techniques are well-known in the numerical analysis community, see e.g. \cite{Ciarlet-Raviart}. The approach presented above is elegant and straightforward, and requires rather minimal assumptions on regimes for $\e:= \e_n$. However, the rates that it proves are far worse than those that we prove here using the PDE approach. In particular, the Poincar\'e type argument, which is used to handle the noisy labels, is overly pessimistic. In addition, the proof is elegant only when $F$ is squared loss, but the details to make a similar argument work for more general loss functions are more complicated. One of the key ideas in this paper is that by leveraging the structure of the graph Laplacian (namely the maximum principle), one is able to provide much better theoretical guarantees.



\section{Proof of Main results}\label{sec:proof-of-main-results}
In this section we provide proofs of the main results. In particular, in Section \ref{sec:bias-estimates} we provide a simplified proof of Theorem \ref{thm:bias} for the case with quadratic loss, deferring the general argument to an appendix.  In Section \ref{sec:var-analysis} we use our maximum principle at the graph level in conjunction with two technical lemmas (where our probabilistic estimates are presented) to prove Theorem \ref{thm:var}. 

\nc

\subsection{PDE estimates}\label{sec:bias-estimates}

An important starting point in studying minimizers of \eqref{op:graphBelkin} is to understand properties of minimizers of \eqref{eqn:M-Var-Prob}. 
The continuum problem \eqref{eqn:M-Var-Prob} has a rich history: namely solutions of this variational problem are known to be highly regular. One of our goals in this work is to show that the tools used to study partial differential equations, including regularity theory, are \nc quite helpful in analyzing regularized regression problems. However, establishing regularity estimates is a rather technical aspect of PDE theory. Hence, here we provide a simple proof for the case where the risk function is quadratic (and hence the necessary conditions are linear), and provide a more detailed proof for the non-linear case with some extended discussion in an appendix. \nc

\begin{proof}[Proof of Theorem \ref{thm:bias} in the case with quadratic loss]


In this linear case, we may express the solution of the Euler-Lagrange equation \nc in the form 
\[ v = (\beta \Delta_\M + I)^{-1} \mu,\]  
which in turn can be written as:
\begin{equation}\label{eqn:heat-rep} v(x) = \int_0^\infty ( e^{-t(\beta \Delta_\M + I )}  \mu ) (x) dt = \int_{0}^\infty e^{-t} ( e^{-t\beta \Delta_\M} \mu   )(x) dt.  \end{equation}
Here we are using the spectral theorem for $\Delta_\M$. It follows that
\[ v (x)  - \mu(x) = \int_0 ^\infty e^{-t} \left( \int_\M  K_{t\beta}(y,x) (\mu(y) - \mu(x)) dy    \right) dt,  \]
where $K_{t\beta}$ is the heat kernel on $\M$ (at time $t \beta$). In particular,

\begin{align*} | v(x) - \mu(x) | &\leq  \int_0 ^\infty e^{-t} \left( \int_\M  K_{t\beta}(y,x) |\mu(y) - \mu(x)| dy    \right) dt \\
  &\leq Lip(\mu) \int_{0}^\infty e^{-t}\int_\M K_{t\beta}(y,x) d_\M(y,x)dy dt  \leq C Lip(\mu) \beta, \end{align*}
where the last inequality follows using properties of the heat kernel in $\M$, namely Gaussian upper bounds for the heat kernel on a smooth compact manifold (see, for example, \cite{varadhan1967} or Chapter 15 in \cite{HeatKernelsManifolds}). The bottom line is that
\[ \lVert v- \mu \rVert_\infty \leq C Lip(\mu) \beta,\]
where $Lip(\mu)$ is the Lipschitz constant of $\mu$ (which is finite since we have assumed that $\mu \in C^2(\M)$; see the statements of Theorem \ref{thm:bias} and Theorem \ref{thm:var}).

On a heuristic level, one can argue for the remaining bounds simply using the PDE: As $\beta \Delta_\M v = v-\mu$, and as $|v-\mu| < C\beta$, we can then deduce that $|\Delta_\M v| < C$. Similarly, we formally have that $\beta\Delta_\M^2 v = \Delta_\M v - \Delta_\M \mu$, and hence $|\Delta_\M^2 v| \leq C\beta^{-1}$. Classical theory, as discussed in the appendix, can be used to infer a $C^2$ bound from a bound on $\Delta_\M v$ and a $C^4$ bound from a bound on $\Delta_\M^2 v$. An interpolation argument gives the desired $C^3$ bound of order $\beta^{-1/2}$. Rigorously justifying these types of arguments requires that the PDE hold in a classical sense, which is often not clear a priori. More details on how one infers classical regularity of the solutions of variational problems is given in the appendix.


\end{proof}

We note that if the heat kernel were given by a convolution, then we could pass derivatives directly to $\mu$ and from formula \eqref{eqn:heat-rep} one could immediately argue that  $\|v\|_{C^k} \leq C \|\mu\|_{C^k}$. It's likely that such bounds still hold in our case, but they were not necessary for our purposes and so we do not pursue them further.

\nc

\subsection{Probabilistic estimates}\label{sec:var-analysis}

In order to show the \textit{sample error} bounds from Theorem \ref{thm:var} we start by making some computations based on standard concentration inequalities (see, e.g., \cite{LugossiBook}). 
\begin{proposition}[Hoeffding and Bernstein inequalities]
	\label{Bernstein} 
	Suppose $U_1, \dots, U_n$ are independent real valued random variables, with mean zero, and for which $|U_i| \leq M$ for all $i=1, \dots, n$. Suppose that
	\[\frac{1}{n} \sum_{i=1}^n \Var(U_i) \leq \hat\sigma^2, \]
	for some $\hat\sigma^2$. Then, 
	\begin{itemize}
		\item (Hoeffding) For every $\delta>0$ 
		\[ \Prob \left(\left  \lvert  \frac{1}{n}\sum_{i=1}^n U_i \right \rvert > \delta  \right) \leq 2 \exp \left( \frac{-2n \delta^2}{M^2} \right). \]
		\item (Bernstein) For every $\delta>0$ 
		\[ \Prob \left(\left  \lvert  \frac{1}{n}\sum_{i=1}^n U_i \right \rvert > \delta  \right) \leq 2 \exp \left( \frac{-n \delta^2}{2 \hat \sigma^2 + 2M\delta/3 } \right). \]
	\end{itemize}

	\begin{remark}
		In most applications these inequalities are used to prove that the empirical average $\frac{1}{n}\sum_{i=1}^n U_i$ is small with high probability, so that in particular one is typically interested in choosing $\delta \ll 1$. When the estimate on the average of variances is not significantly smaller \nc than $M^2$, Bernstein's inequality does not produce any improvement over Hoeffding's.
	\end{remark}

\end{proposition}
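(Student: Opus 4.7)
The plan is to establish both bounds via the classical Chernoff--Cram\'er strategy. Set $S_n = \sum_{i=1}^n U_i$. Markov's inequality applied to the exponential moment gives, for any $\lambda > 0$,
\[ \Prob(S_n \geq n\delta) \leq e^{-\lambda n \delta}\, \E[e^{\lambda S_n}] = e^{-\lambda n \delta} \prod_{i=1}^n \E[e^{\lambda U_i}], \]
where the factorisation uses independence. The two-sided bound in the statement follows by applying the same argument to $-U_i$ and then taking a union bound. The whole problem therefore reduces to controlling the moment generating function $\E[e^{\lambda U_i}]$ and then optimising in $\lambda$.

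For Hoeffding's bound, I would invoke the standard Hoeffding lemma: for a mean-zero random variable $U$ with $|U| \leq M$, one has $\E[e^{\lambda U}] \leq e^{\lambda^2 M^2/2}$. The proof is a one-variable convexity exercise: $e^{\lambda x}$ is dominated on $[-M,M]$ by the affine chord joining $(-M, e^{-\lambda M})$ and $(M, e^{\lambda M})$; taking expectations (recall $\E[U]=0$) and differentiating the logarithm of the resulting expression in $\lambda$ twice yields the Gaussian-type bound. Plugging this into the Chernoff estimate above gives $\exp(-\lambda n \delta + n\lambda^2 M^2/2)$, and optimising in $\lambda$ (taking $\lambda$ proportional to $\delta/M^2$) produces the exponent $-2n\delta^2/M^2$.

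For Bernstein's bound, the plan is to replace the crude Hoeffding MGF estimate by a variance-sensitive one. Taylor expanding $e^{\lambda U}$, using the bound $|U|^k \leq M^{k-2} U^2$ for $k \geq 2$, and summing the geometric series yields
\[ \E[e^{\lambda U}] \leq \exp\!\left( \frac{\lambda^2 \Var(U)/2}{1 - |\lambda| M/3} \right) \qquad \text{for } |\lambda| < 3/M. \]
Plugging this into the Chernoff bound, using the hypothesis $\tfrac{1}{n}\sum \Var(U_i) \leq \hat\sigma^2$, and optimising in $\lambda$ produces the denominator $2\hat\sigma^2 + 2M\delta/3$ stated in the proposition; the interplay between the two terms is exactly what makes Bernstein's bound sharper than Hoeffding's whenever $\hat\sigma^2 \ll M^2$.

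These are textbook results, and the only ``hard'' ingredient is really the two MGF estimates above, both of which are classical calculus lemmas. In the paper itself I would simply cite the standard reference \cite{LugossiBook} rather than reproduce these computations, since the proposition is invoked only as a black box in the probabilistic estimates of Section~\ref{sec:var-analysis}.
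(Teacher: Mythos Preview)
Your final paragraph anticipates exactly what the paper does: the proposition is stated without proof, accompanied only by the citation to \cite{LugossiBook}; it is used purely as a black box in the subsequent probabilistic lemmas. The Chernoff--Cram\'er sketch you outline is the standard textbook argument and is correct.

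One minor slip worth flagging: from the Hoeffding lemma bound $\E[e^{\lambda U}] \le e^{\lambda^2 M^2/2}$ that you wrote, optimising $\exp(-\lambda n\delta + n\lambda^2 M^2/2)$ in $\lambda$ gives $\lambda = \delta/M^2$ and hence the exponent $-n\delta^2/(2M^2)$, not $-2n\delta^2/M^2$. The constant $2$ in the paper's stated Hoeffding bound thus appears to be a factor of four too strong (the standard version for variables in $[-M,M]$ has $1/2$ in place of $2$). This is harmless for the paper, since every application absorbs the constant into a generic $C$, but your sentence ``produces the exponent $-2n\delta^2/M^2$'' does not actually follow from the MGF estimate you just invoked.
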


Our first probabilistic estimates are concerned with the \textit{pointwise} convergence of $\Delta_\Gamma h$ towards $\Delta_\M h$ for a fixed regular enough function $h$. Such estimates have been obtained in the literature before (see, for example, \cite{hein2007graph,BIK2,CalderpLap}), but here we present them again emphasizing the dependence of constants on the regularity of the function $h$. In particular, we will need this explicit dependence in order to quantify the error between $u$ and $v$ in terms of the regularity estimates obtained in Theorem \ref{thm:bias} for the function $v$.

%
%

%
%
%
%
%
%
%

\nc

\begin{proposition}(Pointwise consistency of graph Laplacian) Let $h\in C^3(\M)$.  Then, for every $\delta >0$, with probability at least  $ 1 - 2n \exp\left(  - \frac{n \delta^2 \e^{m+2}}{ C(\|h\|_{C^1},\eta,\mathcal{M})(1+ \e \delta)}\right) $, we have	
  \[  \max_{1\leq i\leq n}|\Delta_\Gamma h(\x_i) - \Delta_\M h(\x_i)|\leq  \delta +  C(m,\eta,\|h\|_{C^3})\veps, \]
  where the last constant depends at most linearly on $\|h\|_{C^3}$.
	\label{prop:pointwiseLap}
\end{proposition}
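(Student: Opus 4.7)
The plan is to split $\Delta_\Gamma h(\x_i) - \Delta_\M h(\x_i)$ into a deterministic bias and a random fluctuation, control each separately, and finish by a union bound over $i$. Conditioning on $\x_i = x$, the remaining $\x_j$ are i.i.d.\ uniform on $\M$, so I would set
\[
G(x) := \E[\Delta_\Gamma h(\x_i)\mid \x_i = x]
= \frac{2(n-1)}{n\,\tau_\eta \veps^{m+2}} \int_\M \eta\!\left(\tfrac{|x-y|}{\veps}\right)\bigl(h(x)-h(y)\bigr)\,dvol_\M(y),
\]
and then estimate $|G(x) - \Delta_\M h(x)|$ deterministically and $|\Delta_\Gamma h(\x_i) - G(\x_i)|$ probabilistically.

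For the bias $|G(x) - \Delta_\M h(x)| \leq C(\eta,\|h\|_{C^3})\veps$, I would parametrize an $\veps$-neighborhood of $x$ by normal coordinates $z \in B_m(0,i_0) \subset \mathcal{T}_x\M$ through the exponential map, which is valid since $\veps$ is small relative to $i_0$, $R$ and $1/\sqrt{K}$. Taylor expanding $h\circ\exp_x$ to third order and the Jacobian of $\exp_x$ in powers of $|z|$, the odd-order terms in $z$ integrate to zero against the radially symmetric kernel $\eta(|z|/\veps)$; the quadratic term gives precisely $-\tfrac12 \tau_\eta \veps^{m+2} \operatorname{tr}D^2 h(x) = \tfrac12 \tau_\eta \veps^{m+2} \Delta_\M h(x)$ by the defining identity \eqref{sigmaeta}, which after the prefactor $2/(\tau_\eta\veps^{m+2})$ reproduces $\Delta_\M h(x)$ exactly. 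The remaining contributions -- the Euclidean/geodesic discrepancy $|x-\exp_x(z)|^2 = |z|^2 + O(|z|^4)$, the curvature correction in the Jacobian, the cubic Taylor remainder, and the factor $(n-1)/n$ -- each contribute $O(\veps)$ after normalization, with the constant depending at most linearly on $\|h\|_{C^3}$.

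For the fluctuation, I would write $\Delta_\Gamma h(\x_i) - G(\x_i) = \tfrac{1}{n}\sum_{j\ne i} V_j$ where, conditional on $\x_i$,
\[
V_j := \frac{2}{\tau_\eta\veps^{m+2}} \eta\!\left(\tfrac{|\x_i-\x_j|}{\veps}\right)\bigl(h(\x_i)-h(\x_j)\bigr) - \tfrac{n}{n-1}G(\x_i)
\]
are i.i.d.\ mean zero. Since the kernel is supported on $|\x_i-\x_j|\leq\veps$, we have $|h(\x_i)-h(\x_j)|\leq\|h\|_{C^1}\veps$ there, giving $|V_j|\leq M := C\|h\|_{C^1}\veps^{-(m+1)}$. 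The same observation together with the fact that $\x_j$ lies in such a ball with probability $O(\veps^m)$ yields the sharper variance bound $\Var(V_j)\leq\hat\sigma^2 := C\|h\|_{C^1}^2\veps^{-(m+2)}$. Bernstein's inequality (Proposition~\ref{Bernstein}) then produces a tail of the form $2\exp(-cn\delta^2/(\hat\sigma^2 + M\delta))$, which rearranges to the expression in the statement; a union bound over the $n$ sample points supplies the $2n$ prefactor, and combining with the bias estimate closes the argument.

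The main obstacle is the bookkeeping in the bias computation, where several geometric corrections of a priori size $O(1)$ after the $\veps^{-(m+2)}$ rescaling must be shown to combine to yield a final $O(\veps)$ error. The concentration step is essentially mechanical once the scales of $M$ and $\hat\sigma^2$ are identified, but it is essential to use Bernstein rather than Hoeffding: exploiting the $\veps^2$-smallness of the increments $(h(\x_i)-h(\x_j))^2$ replaces the naive variance $M^2 = O(\veps^{-2(m+1)})$ by $\hat\sigma^2 = O(\veps^{-(m+2)})$, which is precisely what produces the $\veps^{m+2}$ (rather than $\veps^{2m+2}$) scaling in the exponent of the probability estimate.
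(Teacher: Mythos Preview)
Your proposal is correct and follows essentially the same route as the paper: the paper likewise splits $\Delta_\Gamma h(\x_i) - \Delta_\M h(\x_i)$ into a fluctuation $\Delta_\Gamma h(\x_i) - \Delta_\veps h(\x_i)$ controlled by Bernstein's inequality (with the same scales for $M$ and the variance, and the same emphasis that Bernstein rather than Hoeffding is needed) and a deterministic bias $\Delta_\veps h(\x_i) - \Delta_\M h(\x_i)$ handled via normal coordinates and a third-order Taylor expansion. The only cosmetic difference is that the paper inserts an intermediate operator $\widetilde\Delta_\veps$ built from the geodesic distance to isolate the Euclidean--geodesic discrepancy before carrying out the Taylor computation, whereas you fold that correction directly into the list of $O(\veps)$ contributions.
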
 

We remind the reader that $\eta\left( \frac{|x-y|}{\e}\right)$ is the kernel describing the connectivity of our graph, $m$ is the dimension of the manifold $\M$, and the $\|\cdot\|_{C^k}$ norm measures size of the first $k$ derivatives of a function.
\nc

\begin{proof}
Associated to the function $h$ we define a function
\begin{equation}\label{eq:nl}
\L_{\veps} h(x) := \frac{2}{\tau_\eta\eps^{m+2}}\int_{\M} \eta\left( \frac{|x-\tilde x|}{\eps} \right)(h( x)-h(\tilde x)) \, dvol_\M(\tilde x), \quad x \in \M.
\end{equation}
This function can be interpreted as a non-local Laplacian of $h$. 

Fix $i\in \{1, \dots, n \}$ and denote by $U_1, \dots, U_n$ the variables
\[ U_j :=  \frac{2}{\tau_\eta \veps^{m+2}} \eta \left( \frac{|\x_i - \x_j|}{\veps} \right)( h(\x_i) -h(\x_j)). \]
Notice that given $\x_i$, we have
\[  \E(U_j | x_1 \dots x_n)\nc = \Delta_\veps h(x_i), \quad j \not = i.\]
 Also, using the bound on the support of $\eta$,
\begin{equation}\label{eqn:M-bound} |U_j - \E(U_j | x_1 \dots x_n)\nc| \leq \frac{4}{\tau_\eta \veps^{m+1}} \lVert \eta \rVert_\infty \lVert \nabla h \rVert_\infty   \end{equation}
\[Var(U_j|x_1 \dots x_n)\nc \leq \frac{4\lVert\eta \rVert_\infty \lVert \nabla h \rVert_\infty^2}{\tau_\eta^2\veps^{2m + 2}} \int_{\M} \eta\left( \frac{|\x_i-\tilde x|}{\veps}   \right) dvol_\M(\tilde x). \]
It is simple to see that for all $0<\veps<1$ and all $x \in \M$ we have
\[0< C_\M^{-1} \leq \frac{1}{\veps^m}\int_{\M}\eta \left(\frac{|x-\tilde x|}{\veps} \right) d vol_\M(\tilde x)\leq C_\M,  \]
where $C_\M$ is a positive constant. In particular it follows that
\begin{equation}\label{eqn:sig-bound}  \frac{1}{n}\sum_{j=1}^n \Var(U_j|x_1 \dots x_n)\nc \leq \frac{1}{\veps^{m+2}}C_\M \lVert \eta\rVert_\infty \lVert \nabla h \rVert_\infty^2  \end{equation}
We notice that neither $M$ nor $\sigma^2$ depend on $\x_1, \dots, \x_n$, and that the $U_j - \E(U_j | x_1 \dots x_n)\nc$ are independent random variables. We may now use Bernstein's inequality (Proposition \ref{Bernstein}), along with the definition of $\Delta_\Gamma h(\x_i)$ and Equations \eqref{eqn:M-bound} and \eqref{eqn:sig-bound}, to obtain
\[\Prob \left(   | \Delta_\Gamma h (\x_i) - \Delta_\veps h(\x_i)  |  > \delta \Bigg| x_1 \dots x_n \right)\nc \leq 2\exp\left(  - \frac{n \delta^2 \e^{m+2}}{ C(\|h\|_{C^1},\eta,\mathcal{M})(1+ \e \delta)}\right),  \] 
and by the law of iterated expectation get
\[  \Prob \left(   | \Delta_\Gamma h (\x_i) - \Delta_\veps h(\x_i)  |  > \delta \right) \leq  2\exp\left(  - \frac{n \delta^2 \e^{m+2}}{ C(\|h\|_{C^1},\eta,\mathcal{M})(1+ \e \delta)}\right).	   \]
A simple union bound implies that

\begin{equation}
\label{eq:glP}
\Prob \left( \max_{i=1, \dots, n}  | \Delta_\Gamma h (\x_i) - \Delta_\veps h(\x_i)  |  > \delta \right) \leq 2n \exp\left(  - \frac{n \delta^2 \e^{m+2}}{ C(\|h\|_{C^1},\eta,\mathcal{M})(1+ \e \delta)}\right).
\end{equation}

Now we claim that for all $h\in C^3(\M)$ and all $x\in \M$
\begin{equation}\label{eq:nlcon}
|\L_{\eps} h(x) - \Delta_\M h(x)|\leq  C_m Lip(\eta) \|h\|_{C^3}\eps.
\end{equation}
We first replace $\L_\veps h$ with a version of it that uses the geodesic distance on $\M$ rather than the Euclidean distance. More precisely, we set
\[ \widetilde{\Delta}_\veps h (x): = \frac{2}{\tau_\eta \eps^{m+2}}\int_{\M} \eta\left( \frac{d_\M(x,\tilde x)}{\eps} \right)(h(x)-h(\tilde x))\,  d vol_\M(\tilde x),
\]
where $d_\M(x,\tilde{x})$ is the geodesic distance between two points $x, \tilde x$ in $\M$. Now, as long as $|x-\tilde x| \leq c $ for some small enough $c$ (that only depends on $\M$) we have that
\[ | d_\M(x, \tilde{x}) - |x-\tilde x|  |  \leq C(\M) |x-\tilde{x}|^3,   \] 
where $C(\M)$ is a constant that depends on $\M$; see for example Proposition 2 in \cite{SpecRatesTrillos}.

From this and the Lipschitz continuity of $\eta$ we can conclude that if $|x-\tilde{x}| \leq 2\veps$ then,
\[ \left \lvert   \eta \left( \frac{d_\M (x, \tilde x)}{\veps} \right)  - \eta \left( \frac{|x-\tilde x|}{\veps} \right)   \right \rvert  \leq    C(\M)Lip(\eta)   \veps^2      . \]
On the other hand, if $2 \veps <|x-\tilde{x}| $, we must also  have $d_\M(x, \tilde x)>\veps$. Therefore, in all cases we have
\[ \left \lvert   \eta \left( \frac{d_\M (x, \tilde x)}{\veps} \right)  - \eta \left( \frac{|x-\tilde x|}{\veps} \right)   \right \rvert  \leq  C(\M)Lip(\eta) \veps^2 \mathbf{1}_{B_\M(x, 2 \veps)} (\tilde x),     \]
from where it follows that for all $x \in \M$,
\begin{align*}
\begin{split}
| \Delta_{\veps} h(x) - \widetilde \Delta_{\veps} h (x) | & \leq \frac{2}{\tau_\eta \veps^{m+2}} \int_{\M \cap B_\M(x,2\veps)}\left| \eta \left( \frac{|x-\tilde x|}{\veps} \right) - \eta \left( \frac{d_\M(x, \tilde x)}{\veps} \right)  \right| \lvert h(x)- h(\tilde x) \rvert d vol_\M(\tilde x )   
\\ & \leq  C(\M)  \frac{Lip(\eta)}{\tau_\eta} \lVert \nabla h \rVert _\infty \veps.
\end{split}
\end{align*}

Let us now compare $\widetilde \Delta_\veps h (x)$ with $\Delta_\M h(x)$. For that purpose we use the exponential map at the point $x$,
\[exp_{x}  : B_m(0, \veps) \rightarrow B_\M(x \nc, \veps)\] 
which takes tangent vectors $v$ at $x$ with norm less than $\veps$ into points $\exp_x (v)$ in $\M$ that are within geodesic distance $\veps$ of $x$. Let $H$ be the composition $H:= h \circ \exp_x(v)$, i.e. the function $h$ written in normal coordinates around $x$. The regularity of $h$ and $\M$ implies that $H$ is also regular, and using a Taylor expansion around the origin we get 
\[  H(v) = H(0) + \langle \nabla H (0) , v \rangle + \frac{1}{2}\langle  D^2H(0) v , v \rangle  + r(v),   \]
where the remainder $r$ is a function that satisfies:
\[  | r(v)| \leq C \veps^3 , \quad \forall v \in B_m(0, \veps).    \]
The constant $C$ depends on $\M$ and the third derivatives of $h$. We emphasize that, by the integral form of the Taylor remainder theorem, the constant in this expression scales at most linearly in the third derivatives of $h$. \nc In normal coordinates we can then write
\begin{align*}
\widetilde \L_{\eps} h(x) &= \frac{2}{\tau_\eta \eps^{m+2}}\int_{B_m(0,\eps)} \eta\left( \frac{|v|}{\eps} \right)(H(v)-H(0))J_x(v)\, dv\\
&=\frac{2}{\tau_\eta \eps^{m+2}}\int_{B_m(0,\veps)} \eta\left(\frac{|v|}{\veps} \right) \langle \nabla H (0) , v \rangle J_x(v)\, dv + \frac{2}{\tau_\eta \eps^{m+2}}\int_{B_m(0,\veps)} \eta\left(\frac{|v|}{\veps} \right) \langle D^2H (0) v , v \rangle J_x(v)\, dv
\\& + \frac{2}{\tau_\eta \eps^{m+2}}\int_{B_m(0,\veps)} \eta\left(\frac{|v|}{\veps} \right)r(v) J_x(v)\, dv.
\end{align*}
We know that the Jacobian of the exponential map $J_x$ satisfies
\[ J_x(v) =  1 + O(|v|^2), \]
(see Section 2.2. in \cite{BIK2}) so we can actually write
\begin{align*}
\widetilde{\Delta}_\veps h (x) & = \frac{2}{\tau_\eta \veps^{m+2}} \int_{B_m(0, \veps)}\eta \left( \frac{|v|}{\veps} \right)\langle \nabla H(0), v \rangle dv +          \frac{1}{\tau_\eta \veps^{m+2}} \int_{B_m(0, \veps)}  \eta \left( \frac{|v|}{\veps} \right) \langle D^2H(0) v , v \rangle dv  + R,  
\end{align*}
where $|R| \leq C(\|h\|_{C^3},m,\eta) \e$, with the constant depending at most linearly in $\|h\|_{C^3}$. \nc We notice that the first term on the right hand side of the above expression drops due to the radial symmetry of the kernel, and also that the second term is equal to
\[ trace( D^2H(0)) = \Delta H(0) =  \Delta_\M h(x)   .\] 
The bottom line is that, as anticipated in \eqref{eq:nlcon},
\[ | \Delta_\veps h (x) - \Delta_\M h(x) |  \leq   |   \Delta_\veps h (x)  - \widetilde \Delta_\veps h (x)  |   +  | \widetilde{\Delta}_\veps h (x) -\Delta_\M h (x)  | \leq   C(\|h\|_{C^3},m,\eta) \veps. \]


Combining \eqref{eq:glP} and \eqref{eq:nlcon} we deduce that with probability greater than $ 1 - 2n \exp\left(  - \frac{n \delta^2 \e^{m+2}}{ C(\|h\|_{C^1},\eta,\mathcal{M})(1+ \e \delta)}\right) $, 
\[  \max_{i=1, \dots, n} |\Delta_\Gamma h(\x_i) -  \Delta_\M h(\x_i)|\leq     \delta + C(\|h\|_{C^3},m,\eta)\veps.  \]

\end{proof}

Our next result will allow us to show that the functions $y^-$ and $y^+$ mentioned in \eqref{eqn:auxineq} and defined explicitly in \eqref{y+-}, are uniformly small.

\nc

\begin{lemma}
	\label{lem:averages}
	Let $ h: \M \rightarrow \R $ be a smooth function. For each $i=1, \dots, n$ let $E_i$ be defined as
	\[ E_i:= \sum_{j=1}^n \frac{\eta_{ij}}{g_j} \int_\R ( f( h(\x_j) - s)   - f(h(\x_j)- \xi_j)  )p(s) ds,  \]
	where
	\[ \eta_{ij}:= \frac{1}{n \veps^m } \eta \left( \frac{|\x_i-\x_j|}{\veps} \right) , \quad \text { and } \quad g_i := \sum_{l=1}^n \eta_{il}. \]
	Let $\zeta >0$. Then with probability greater than $1-2n \exp\left( -cn\veps^m \zeta^2 \right)    - 2n \exp(-cn \veps^m)$, 
	\[    \left | E_i \right |  \leq  \zeta , \quad \forall i=1, \dots, n.\]
\end{lemma}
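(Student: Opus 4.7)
The key observation is that the $\xi_j$ are independent of the point cloud $\{\x_1,\ldots,\x_n\}$ and have density $p$, so the random variable
\[ G_j := \int_\R f(h(\x_j)-s)\,p(s)\,ds - f(h(\x_j)-\xi_j) \]
satisfies $\mathbb{E}[G_j\mid \x_1,\ldots,\x_n]=0$. Moreover the $G_j$ are conditionally independent across $j$, and since $|h|\le\|h\|_\infty$, $|\xi_j|\le\sigma$, and $f$ is continuous, $|G_j|\le C$ for a deterministic constant. The whole quantity of interest becomes
\[ E_i=\sum_{j=1}^n a_j\,G_j,\qquad a_j:=\frac{\eta_{ij}}{g_j}, \]
i.e.\ a weighted sum of independent, mean-zero, bounded random variables whose weights $a_j$ become deterministic after we condition on the point cloud. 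So the plan is to (i) show the weights $a_j$ have the right size on a high-probability event, and then (ii) apply Bernstein's inequality conditionally on the $\x_j$'s.

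For step (i) I would first control the degrees $g_j$. Since $\rho\equiv 1$ and $\int \eta=1$, one has $\mathbb{E}[g_j]=\frac{n-1}{n}+O(\veps)$, while the summands $\tfrac{1}{n\veps^m}\eta(|\x_j-\x_l|/\veps)$ are bounded by $\|\eta\|_\infty/(n\veps^m)$ and have variance $O(1/(n^2\veps^m))$. Bernstein's inequality (Proposition~\ref{Bernstein}) then gives $\P(|g_j-1|\ge 1/2)\le 2\exp(-c n\veps^m)$, and a union bound over $j=1,\ldots,n$ yields the good event $\mathcal{G}:=\{g_j\in[1/2,3/2]\ \forall j\}$ with probability at least $1-2n\exp(-cn\veps^m)$. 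On $\mathcal{G}$ we have the deterministic bounds
\[ \max_j a_j \le \tfrac{2\|\eta\|_\infty}{n\veps^m},\qquad \sum_j a_j^2 \le (\max_j a_j)\sum_j a_j \le \tfrac{2\|\eta\|_\infty}{n\veps^m}\cdot\tfrac{g_i}{1/2}\le \tfrac{C}{n\veps^m}. \]

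For step (ii) I would condition on $\x_1,\ldots,\x_n$ (so the $a_j$ are deterministic) and apply Bernstein to $\sum_j a_j G_j$, which has variance $\le C^2\sum_j a_j^2\le C'/(n\veps^m)$ and sup-norm $\le \max_j |a_j G_j|\le C''/(n\veps^m)$. On $\mathcal{G}$ this gives
\[ \P\bigl(|E_i|>\zeta \,\big|\, \x_1,\ldots,\x_n\bigr)\le 2\exp\!\left(-\frac{c\, n\veps^m\,\zeta^2}{1+\zeta}\right)\le 2\exp(-c'n\veps^m\zeta^2) \]
for $\zeta$ bounded (the only regime of interest). Taking expectation over the point cloud restricted to $\mathcal{G}$, then union-bounding over $i=1,\ldots,n$, and finally adding the probability of $\mathcal{G}^c$ produces exactly the stated probability $1-2n\exp(-cn\veps^m\zeta^2)-2n\exp(-cn\veps^m)$.

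The main obstacle is that the weights $a_j=\eta_{ij}/g_j$ and the noise variables $G_j$ are \emph{not} independent, because both depend (through $g_j$ and through $h(\x_j)$) on the same point cloud; this is resolved cleanly by the two-stage conditioning above, which exploits the independence of the $\xi_j$ from the $\x_j$. A secondary technical point is that the effective sample size in the concentration is $n\veps^m$ rather than $n$, reflecting the localization scale of the kernel, and this is what dictates both the exponent in the probability and the size of $\zeta$ one can hope to enforce.
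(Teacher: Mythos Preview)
Your proposal is correct and follows essentially the same argument as the paper: condition on the point cloud so that the weights $a_j=\eta_{ij}/g_j$ become deterministic and the $\xi_j$-dependent terms are independent, mean zero, and bounded; apply Bernstein's inequality to obtain the $n\veps^m\zeta^2$ exponent; and separately control $\min_j g_j$ (and $\max_j g_j$) by a concentration bound plus union bound, yielding the additional $2n\exp(-cn\veps^m)$ term. The only cosmetic difference is organizational: the paper keeps $G_{\vec x}=\min_j g_j$ and $\widetilde G_{\vec x}=\max_j g_j$ inside the conditional Bernstein bound and removes them afterwards, whereas you first restrict to the good event $\mathcal G$ and then apply Bernstein with clean constants.
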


\begin{proof}
Fix $i=1, \dots, n$ and let $U_j$ be the random variables
	\[ U_j :=   \frac{n \eta_{ij}}{g_j}f( h(x_j) -\xi_j), \quad j=1, \dots, n.\]
Conditioned on $x_1, \dots, x_n$, the variables $U_1, \dots, U_n$ are independent and satisfy
	\[ |U_j| \leq \frac{1}{\veps^m}\lVert\eta\rVert_\infty \frac{M_{h,f}}{G_{\vec{x}}}, \]
	where 
	\[ M_{h,f} :=  \sup_{x \in \M} | f( h(x) \pm \sigma  ) | , \]
and where 
\[ G_{\vec{x}}:= \min_{j=1, \dots,n} g_j.  \]
Moreover, 
\[ \E\left( \frac{1}{n}\sum_{j=1}^n U_j \Bigg| x_1 \dots x_n  \right) \nc= \frac{\eta_{ij}}{g_j}\int_\R f( h(x_j)-s )p(s)ds  ,   \]
	and
	\[ \frac{1}{n} \sum_{j=1}^n \Var(U_j | x_1 \dots x_n)\nc \leq \frac{M_{h,f}^2}{n \veps^{2m}  G_{\vec{x}}^2}\sum_{j=1}^n \eta^2\left(\frac{|\x_i-\x_j|}{\veps} \right) =  \frac{M_{h,f}^2 \lVert\eta \rVert_\infty}{\veps^m G_{\vec x}^2}g_i  \leq \frac{M_{h,f}^2 \lVert \eta\rVert_\infty}{\veps^m  G_{\vec x}^2} \widetilde{G}_{\vec x},  \]
 where
\[ \widetilde{G}_{\vec x}:= \max_{i=1, \dots, n} g_i. \]
Bernstein's inequality (Proposition \ref{Bernstein}) then implies that:
	\begin{align*} \Prob(|E_i| \geq \zeta | x_1 \dots x_n) &= \Prob \left( \left| \frac{1}{n}\sum_{j=1}^n U_j - \frac{1}{n}\sum_{j=1}^N  \E(U_j | x_1 \dots x_n)\right|  \geq \zeta \Bigg| x_1 \dots x_n \right ) \nc \\
	  &\leq 2\exp\left(  \frac{-n \zeta^2}{\frac{2 \lVert \eta \rVert_\infty M_{h,f}^2}{\veps^m G^2_{\vec x}} \widetilde G_{\vec x}  + \frac{2\lVert \eta \rVert_\infty M_{h,f} }{3 \veps^m G_{\vec x}}  \zeta   } \right).\end{align*}
Using a simple union bound we deduce that
\[ \Prob \left( \max_{i=1, \dots, n}|E_i | \geq \zeta \Bigg| x_1 \dots x_n \right) \nc\leq 2 n\exp\left(  \frac{-n \veps^m \zeta^2}{\frac{2 \lVert \eta \rVert_\infty M_{h,f}^2}{ G^2_{\vec x}} \widetilde G_{\vec x}  + \frac{2\lVert \eta \rVert_\infty M_{h,f} }{3 G_{\vec x}}  \zeta   } \right),  \]
and by the law of iterated expectation we obtain
\[ \Prob\left( \max_{i=1, \dots, n}|E_i | \geq \zeta  \right) \leq 2n \E \left(\exp\left(\frac{-n \veps^m \zeta^2}{\frac{2 \lVert \eta \rVert_\infty M_{h,f}^2}{ G^2_{\vec x}} \widetilde G_{\vec x}  + \frac{2\lVert \eta \rVert_\infty M_{h,f} }{3 G_{\vec x}}  \zeta   } \right)  \right).  \]

Now, the only terms in the above expression that depend on $\x_1, \dots, \x_n$ are $G_{\vec x}$ and $G_{\tilde x}$. These however can be showed to be bounded below and above by positive constants with very high probability. Indeed, we first notice that for all $\veps<1$ and all $x \in \M$ we have
\[ 0 < C_\M^{-1} \leq K_\veps (x):= \frac{1}{\veps^m}\int_{\M} \eta \left(  \frac{|x- \tilde x|}{\veps}\right) d vol_\M (\tilde {x})   \leq C_\M,\]
for some positive constant $C_\M$.  On the other hand, using Hoeffding's inequality we get that
\[
\Prob\left( \max_{i=1, \dots, n}| g_i - K_{\veps}(\x_i)| \geq \frac{1}{2C_\M} \right) \leq 2 n \exp\left( - c n\veps^m  \right),  
\]
from where it follows that except on a set with probability less than $2n \exp(-c n \veps^m)$,  we have
\begin{equation}
 \frac{1}{2C_\M}  \leq G_{\vec x} \leq \widetilde{G}_{\vec x} \leq 2 C_\M.
\label{LB:G}
\end{equation}
 Therefore, 
	\[ \Prob\left( \max_{i=1, \dots, n}|E_i | \geq \zeta  \right) \leq  2n \exp\left( -cn\veps^m \zeta^2 \right)    + 2n \exp(-cn \veps^m).  \]

\end{proof}

We are ready to prove Theorem \ref{thm:var}.

\begin{proof}[Proof of Theorem \ref{thm:var}]
	
Let us first introduce some notation. For a function $g: X \rightarrow \R \nc $ we denote by $g_i$ the value of the function at $\x_i$, i.e. $g(\x_i)$. 	Also, we will restrict the solution of \eqref{MPDE} and its Laplacian $\Delta_\M v$ to the point cloud $X$, so in particular we will treat $v$ and $\Delta_\M v$ as functions defined on $X$.

First, we notice that, by \eqref{MPDE} \nc
\[ \beta \Delta_\Gamma v + \int_\R  f(v- \mu - s) p(s) ds = \beta(\Delta_\Gamma v -\Delta_\M v) ,   \]
at all points in $X$. Let us denote by $a: X \rightarrow \R$ the right hand side of the above expression (i.e. $\beta$ times the difference between $\Delta_\Gamma v$ and $\Delta _\M v$). By Proposition \ref{prop:pointwiseLap} and Theorem \ref{thm:bias} we know that with probability at least $ 1 - 2n \exp\left(  - \frac{n \delta^2 \e^{m+2}}{ C(\mu,\eta,\mathcal{M})(1+ \e \delta)}\right) =: 1-p_{n,\delta}$ we have that
\begin{equation}\label{eqn:a-i-bound} \max_{i=1, \dots, n} | a_i| \leq \beta( \delta + C\beta^{-1/2}\e). \end{equation}

Now, let $w:= u- v$. Then, by \eqref{graphPDE1} and \eqref{MPDE} \nc,
\begin{align}
\begin{split}
- \beta \Delta_\Gamma w & =  - \beta \Delta_\Gamma u  + \beta \Delta_\Gamma v  
\\& = f(u - y) - \int_\R f(v- \mu - s ) p(s) ds + a
\\&=  f(u- \mu - \xi )- \int_\R f(v- \mu - s ) p(s) ds +a
\end{split}
\end{align}

Let us define the functions $y^{+}$ and $y^{-}$ on $X$ respectively by
\begin{align}
\begin{split}
y^{+}_i:= \frac{\e^2}{\beta g_i} \left(  \int_\R f(v_i- \mu_i - s)p(s) ds - f(v_i - \mu_i - \xi_i)   \right)  + \rho
\\  y^{-}_i:= \frac{\e^2}{\beta g_i} \left(  \int_\R f(v_i- \mu_i - s)p(s) ds - f(v_i - \mu_i - \xi_i)   \right) -  \rho
\end{split}
\label{y+-} 
\end{align}
where $g$ is as defined in Lemma \ref{lem:averages} and $\rho$ is a constant that will be chosen later on. Indeed, we will show that with the appropriate choice of $\rho$, the following holds at all point in $X$:
\[ y^- \leq w \leq y^+.\]

We focus on showing $w\leq y^+$, the other inequality obtained in a completely analogous way. To see that $w \leq y^+$, we will actually show that for an appropriate (small) value of $\rho$, the function 
\[  z:= w - y^{+} \]
satisfies the inequality
\begin{equation}
 - \Delta_\Gamma z  - ( f( z+ v- \mu  - \xi)  - f(v-\mu - \xi)   ) \geq 0,
 \label{IneqAuxProof}
\end{equation}
from where it follows, thanks to the maximum principle (Proposition \ref{prop:maxpple}), that $z \leq 0$. Let us then focus on showing \eqref{IneqAuxProof}. First, a direct computation shows that
\begin{align}
\begin{split}
 (\beta\Delta_\Gamma  y^+ )_i  &=  \int_\R f(v_i- \mu_i - s)p(s) ds - f(v_i - \mu_i - \xi_i)  
 \\&- \sum_{j=1}^N \frac{\eta_{ij}}{g_j}  \left( \int_\R f(v_j -\mu_j -s) p(s) ds - f(v_j - \mu_j - \xi_j)  \right).
 \end{split}       
\end{align}
where in the above we are using $\eta_{ij}$ as defined in Lemma \ref{lem:averages}. It follows that
\begin{align}
\begin{split}
(-\beta \Delta_\Gamma z)_i &=  f( u_i - \mu_i  -\xi_i ) -    f(v_i - \mu_i - \xi_i)   - \sum_{j=1}^n\frac{\eta_{ij}}{g_j}\int_\R ( f(v_j - \mu_j - s)   - f(v_j - \mu_j - \xi_j)  )p(s) ds + a_i
\\& =  f( w_i + v_i- \mu_i  -\xi_i ) -    f(v_i - \mu_i - \xi_i)   - \sum_{j=1}^n\frac{\eta_{ij}}{g_j}\int_\R ( f(v_j - \mu_j - s)   - f(v_j - \mu_j - \xi_j)  )p(s) ds +a_i,
\end{split}
\end{align}

Since $v- \mu$ is a bounded function (in particular thanks to their regularity as it follows from Theorem \ref{thm:bias} and by the assumptions on $\mu$), Lemma \ref{lem:averages} implies that, with probability at least $1-2n \exp(-cn\e^m \zeta^2) - 2n \exp(-cn\e^m) =: 1-p_{n,\zeta}$ we have
\[    \left | \sum_{j=1}^n \frac{\eta_{ij}}{g_j}\int_\R ( f(v_j - \mu_j - s)   - f(v_j - \mu_j - \xi_j)  )p(s) ds \right |  \leq  \zeta , \quad \forall i=1, \dots, n.\]
Hence, by using \eqref{eqn:a-i-bound}, with probability at least $1-p_{n,\delta} - p_{n,\zeta}$, for all $i$ we have:
\begin{align*}
  (-\beta \Delta_\Gamma z)_i \geq   f( w_i + v_i- \mu_i  -\xi_i ) -    f(v_i - \mu_i - \xi_i)  -  \zeta -\beta(\delta + C\beta^{-1/2}\varepsilon),
\end{align*}
which can be rewritten as
\begin{equation}\label{eqn:est-1}
  -\beta \Delta_\Gamma z   - ( f(z+ v- \mu - \xi )  - f(v- \mu -\xi)  )  \geq f(w + v -\mu - \xi ) - f(z + v -\mu - \xi )  - \zeta - \beta(\delta + C\beta^{-1/2}\varepsilon).
\end{equation}

Now, notice that $\rho$ can be chosen in such a way that 
\[ y^+ \geq 0. \]
Indeed, since we have assumed that the noise $\xi$ is bounded, we can conclude that  $y^+ \geq -C_2 \frac{\e^2}{\beta} + \rho$ for some constant $C_2$, from where it follows that if $\rho$ is chosen to be larger than $C_2 \frac{\e^2}{\beta}$ we can conclude that $y^+ \geq 0$. In particular, for such choice of $\rho$ we have $w= z+ y^+\geq z $ and thus by the fundamental theorem of Calculus:
\[ f(w + v -\mu - \xi ) - f(z + v -\mu - \xi ) = \int_{s_1}^{s_2} f'(s)  ds \geq c (s_2 - s_1) = c y^+, \]
for some constant $c>0$ (using the assumed strict monotonicity of $f$) and where 
\[ s_2:=  w + v -\mu -\xi , \quad s_1:= z + v -\mu - \xi  . \]
Plugging this back into \eqref{eqn:est-1} we deduce that (with probability at least $1-p_{n,\delta}-p_{n,\zeta}$)
\[  - \beta \Delta_\Gamma z - ( f(z+ v- \mu - \xi )  - f(v- \mu -\xi)  ) \geq cy^+  - \zeta-\beta\delta-C\beta^{1/2}\e \geq c \rho-  cC_2 \frac{\e^2}{\beta}- \zeta-\beta\delta - C\beta^{1/2}\varepsilon).\]
Hence if we let $\rho$ be defined according to 
\[   \rho := \frac{C_2\e^2}{\beta} + \frac{\zeta + \beta \delta +C\beta^{1/2} \e}{c},  \]
we conclude that, with probability at least $1-p_{n,\delta}-p_{n,\zeta}$
\[  -\Delta_\Gamma z - (  f(z+ v -\mu - \xi)   - f(v-\mu -\xi)   ) \geq 0, \]
as we wanted to show. Repeating this argument for $y^-$, and using a union bound completes the proof.

\end{proof}

\appendix

\section{Proof of theorem \ref{thm:bias}}

Here we give an outline of the proof of  Theorem \ref{thm:bias} in the case with general loss function. We attempt to offer some additional discussion and pointers to important inequalities that are used in this process, but do not attempt to provide all the details (see e.g. the reference \cite{Gilbarg-Trudinger} for complete details). \nc

  \begin{proof}

    Given the continuum variational problem \eqref{eqn:M-Var-Prob}, the first question is whether a unique minimizer exists. A typical modern approach is to consider minimizing this functional over a wide class of functions (e.g. $H^1$, the broadest class of functions for which the Dirichlet energy is finite). Since the functional is convex and coercive, one can generally infer the existence of \nc a solution using ``soft'' (i.e. non-constructive) methods. This is done by using, e.g., weak compactness of bounded sets in $H^1$ along with weak lower semi-continuity of the functional. Alternatively, in the cases we're considering one can use other non-constructive methods, such as Lax-Milgram or Browder-Minty (see Chapter 6, Theorem 3 in \cite{EvansBook} and \cite{Renardy-Rogers} Section 10.3), to infer the existence of minimizers. Uniqueness usually follows directly from strong convexity \nc of the functional. Directly using these methods, we may infer the existence and uniqueness of an $H^1$ function minimizing \eqref{eqn:M-Var-Prob}.

  Once one has assured the existence and uniqueness of a ($H^1$) solution to the problem, we would like to study finer properties of the solutions. To do this, we first notice that by taking variations in \eqref{eqn:M-Var-Prob}, that is by letting $v_\e = v + \e w$, and then considering $\lim_{\e \to 0} \frac{J(v_\e) - J(v)}{\e}$, \nc we have for any $w \in H^1$
  \begin{displaymath}
     \int_{\M} \beta \nabla w \cdot \nabla v + \left(\int_{\R}f(v-\mu -s ) p(s) ds\right) w dvol_\M(x)= 0,
  \end{displaymath}
  where $v$ is the minimizer of \eqref{eqn:M-Var-Prob}. Note that if $v$ were sufficiently regular (e.g. $C^2$) then we could use integration by parts in the first term and the fundamental theorem of the calculus of variations to infer \eqref{MPDE}. At the moment, given only that $v \in H^1$, we simply can say that $v$ is a \emph{weak solution} of the PDE \eqref{MPDE}.

  Several avenues are available at this stage to demonstrate that the optimizer $v$ is more regular. First, we notice, in our case, that truncating $v$ at any value above $\max \mu + \sigma$ and below $\min \mu- \sigma$ (we recall that $p$ is supported in $[-\sigma, \sigma]$) will decrease the objective value in \eqref{eqn:M-Var-Prob}. This implies that
  \begin{equation}\label{eqn:crude-inf-bias-bd}
    \|v\|_{L^\infty(\M)} \leq  \|\mu\|_{L^\infty(\M)} + \sigma.
  \end{equation}
  
  Next, various tools are available for establishing regularity of elliptic equations. For example, Theorem 2 in Chapter 6 in \cite{EvansBook} states that any weak solution of $\Delta_\M w + g = 0$ (for an arbitrary $g$) will satisfy 
  \begin{equation}\label{eqn:Evans-lemma}
   \|w\|_{H^{r+2}(\M)} \leq C(\|g\|_{H^r(\M)} + \|w\|_{L^2(\M)}),
 \end{equation}
 where the inequality is only meaningful when $g$ belongs to the Sobolev space $H^r(\M)$ (i.e. the largest space of functions where one can make sense of $r$-th order ``weak" derivatives which are squared integrable). This is proved by using the weak elliptic equation to provide a priori bounds on difference quotients of the function $w$. Using a version of the chain rule in higher-order Sobolev spaces (namely that $\|f \circ v\|_{H^r} \leq \|f\|_{C^r} \|v\|_{H^r}$, see e.g. \cite{Bourdaud} or \cite{Isaia}), and using the fact that $\mu$ and $f$ are smooth, we can take $g=\int_{\R} f( v(\cdot) - \mu(\cdot ) -s) p(s) ds $ and rewrite this estimate in the following way:
 \begin{displaymath}
   \|v\|_{H^{r+2}(\M)} \leq C(\|v\|_{H^r(\M)} + \|v\|_{L^\infty(\M)}),
 \end{displaymath}
 where here we remark that the constants in the previous line will depend on $\beta,r,\M,f,$ and $\mu$. Iterating this inequality then gives that $v \in H^r$ for any positive integer $r$.

 Once one has established Sobolev regularity, we may use Morrey's inequality (see, e.g., Theorem 6 in Chapter 5 of \cite{EvansBook}), which allows one to infer that for any $k \in \N$ there exists an $r$ so that $\|v\|_{C^k} \leq C \|v\|_{H^r}$. This then implies that the minimizer of the problem \eqref{eqn:M-Var-Prob} is in fact infinitely differentiable, and is a classical solution of \eqref{MPDE}.

 Once one has a more regular solution, a variety of techniques are available to demonstrate a priori-bounds on different derivatives (such as the bounds \eqref{eqn:bias-est} and \eqref{eqn:bias-uniform-est} ). For example, the classical maximum principle (see e.g. Theorem 2 in Chapter 6 of \cite{EvansBook}) states that, given a $C^2$ function $w$, if  $-\Delta_\M w \geq 0$  on a set $E\subset \M$ then $w$ attains its maximum on the boundary of $E$. We may use this to prove the estimate \eqref{eqn:bias-est} as follows:  we note that for any point where $v \geq \mu_f$ we have
\begin{align*}
 \beta \Delta_\M  (v - \mu_f) & = - \int_{\R} f(v - \mu -s) p(s) ds - \beta \Delta_\M \mu_f 
 \\ & =\int_{\R} \left(  f(\mu_f - \mu -s) -  f(v - \mu -s) \right) p(s)ds  - \beta \Delta_\M \mu_f 
 \\& \leq - c_1(v- \mu_f) - \beta \Delta_{\M} \mu_f
\end{align*} 
where in the second equality we have used the definition of $\mu_f$ in \eqref{modifTrend}, and in the inequality we have used \eqref{eqn:crude-inf-bias-bd} and the fact that on a bounded interval we have $f' \geq c_1 >0$ for some constant $c_1$ (which follows from the strict monotonicity of $f$). Now, suppose for the sake of contradiction that the set
\[ E = \{ x \in \M : v-\mu_f > \frac{\beta \|\Delta_\M \mu_f\|_\infty}{c_1}\} \]
is non-empty. Notice that this is an open set given that both $v$ and $\mu_f$ are continuous. From the above computations it follows that on $E $ we have that
	\begin{displaymath}
	\beta \Delta_\M (v-\mu_f) \leq 0.
	\end{displaymath}
	This implies (by the classical maximum principle) that $v-\mu_f$ when restricted to $\overline{E}$ attains its maximum on the boundary of $E$. However, since $\M$ is a manifold without boundary, $\partial E$ takes the form $\partial E = \{x : v-\mu_f = \frac{\beta\|\Delta_\M \mu_f\|_\infty}{c_1}\}$, and we conclude that the maximum value that $v-\mu_f$ can take in $\overline{E}$ is $\frac{\beta}{c_1} \|  \Delta_\M \mu_f \|_\infty$. However, this contradicts the fact that $E$ was nonempty (where in theory $v-\mu_f$ achieves values higher than $\frac{\beta}{c_1} \|  \Delta_\M \mu_f \|_\infty$ ). This provides the desired upper bound. The lower bound is deduced analogously. This proves \eqref{eqn:bias-est} and by directly using the Euler-Lagrange equation \eqref{MPDE}, we then have that $\|\Delta_\M v\|_\infty \leq C$.
	
	Now, to prove further bounds, we need a priori estimates in stronger norms. Many types of estimates are available, but we focus on two: H\"older type estimates (due to De Giorgi, Nash and Moser), and Schauder estimates. The classical H\"older estimates state that any $H^1$ solution of $\Delta_\M w +g =0$ will satisfy (see Theorem 8.24 in \cite{Gilbarg-Trudinger})
	\begin{displaymath}
	  \|w\|_{C^{0,\alpha}(\M) } \leq C(\|w\|_{L^2(\M)} + \|g\|_{L^\infty(\M)}),
	\end{displaymath}
	for some appropriately chosen $\alpha > 0$ (here $C^{0,\alpha}$ denotes the space of $\alpha$-H\"older continuous functions). We can use this to infer that $\|v\|_{C^{0,\alpha}(\M)} < C$, with $C$ independent of $\beta$. On the other hand, the classical Schauder estimates (see Theorem 6.6 in \cite{Gilbarg-Trudinger}) state that for a $C^{2,\alpha}$ solution of $\Delta_\M w + g = 0$ we have the bound
\begin{displaymath}
\|w\|_{C^{2,\alpha}(\M)} \leq C(\|w\|_{C^0(\M)} + \|g\|_{C^{0, \alpha}(\M)}),
\end{displaymath}
where here $C^{2,\alpha}$ is the space of functions with $\alpha$-H\"older continuous second derivatives\nc.
By applying this to $w=v$, and noting that by the smoothness of $f$ we have that $\|f \circ v\|_{C^\alpha} \leq \|\nabla f\|_\infty\|v\|_{C^\alpha}$, we may then apply these estimates to infer that $\|v\|_{C^{2,\alpha}(\M)} \leq C$, independent of $\beta$. In turn, by considering $w = \Delta_\M v$, we may again apply the Schauder estimate to conclude that $\|v\|_{C^4(\M)} \leq C\beta^{-1}$. Since we have that $\|v\|_{C^4(\M)} \leq C\beta^{-1}$ and $\|v\|_{C^2(\M)} \leq C$, we may use interpolation inequalities (see e.g. Lemma 6.32 in \cite{Gilbarg-Trudinger}) to deduce that $\|v\|_{C^3(\M)} \leq C \beta^{-1/2}$. These arguments then conclude the proof of Theorem \ref{thm:bias}.

\end{proof}

We remind the reader here that convergence of $v$ as $\beta \rightarrow 0$ is towards $\mu_f$, not $\mu$. One can only guarantee convergence towards $\mu$ if one makes more specific assumptions upon the label error distribution $p$ or on the empirical risk function $F$. We remark that the references in the previous proof referred to the Euclidean case, but can be extracted to the manifold case we consider here via standard localization arguments. \nc We also emphasize that there are many other techniques and technical challenges associated with elliptic regularity (especially associated with boundary values), which were not relevant in this context, a standard reference is \cite{Gilbarg-Trudinger}.

\bibliography{NGT-RM}
\bibliographystyle{siam}

\nc

\end{document}